\newcommand{\TheTitle}{Stability and Fluctuations in a Simple Model of  Phonetic Category Change} 
\newcommand{\TheAuthors}{B. Goodman, and P. F. Tupper}
\title{{\TheTitle}\thanks{This work was funded by the National Science and Engineering Research Council, Canada.}}
\author{
  Benjamin Goodman\thanks{Department of Mathematics, Simon Fraser University, Burnaby, BC, Canada \newline
    (\email{bgoodman@sfu.ca, pft3@sfu.ca}).}
  \and
  Paul F. Tupper\footnotemark[2]}
\newcommand{\R}{\mathbb{R}}
\newcommand{\beq}{\begin{eqnarray*}}
\newcommand{\eeq}{\end{eqnarray*}}
\newcommand{\beqa}{\begin{eqnarray}}
\newcommand{\eeqa}{\end{eqnarray}}
\newcommand{\eps}{\varepsilon}
\newcommand{\expect}{\mathbb{E}}
\newcommand{\Var}{\mathrm{Var}}
\def\brac#1{\left( #1 \right)}
\def\sqbrac#1{\left[ #1 \right]}
\def\squigbr#1{\left\{ #1 \right\}}
\DeclareMathOperator*{\argmax}{arg\,max}
\begin{document}


\maketitle

\begin{abstract}
In spoken languages, speakers divide up the space of phonetic possibilities into different regions, corresponding to different phonemes. We consider a simple exemplar model of how this division of phonetic space varies over  time among a population of language users. In the particular model we consider,
we show that, once the system is initialized with a given set of phonemes, that phonemes do not become extinct: all phonemes will be maintained in the system for all time.
 This is  in contrast to what is observed in more complex models. Furthermore, we show that the boundaries between phonemes fluctuate and we quantitatively study the fluctuations in a simple instance of our model.  These results prepare the ground for more sophisticated models in which some phonemes go extinct or new phonemes emerge through other processes.
\end{abstract}

\newtheorem{thm}{Theorem}
\newtheorem{cor}{Corollary}

\begin{keywords}
  exemplar models, linguistics, categorization, $k$-means clustering, random dynamical systems
\end{keywords}

\begin{AMS}
  91F20, 94A99, 60GNN
\end{AMS}

\section{Introduction}

Exemplar models are  used in linguistics to describe how language users store linguistic categories \cite{wedel2006}.  Examples of the type of linguistic categories we have in mind are vowel sounds like \textipa{A}, \textipa{3}, \textipa{I} (corresponding to the vowel sounds in `bat', `bet', `bit', respectively). When a person hears a vowel sound within a word, they have to classify it as belonging to one of the categories of vowels based on the sound's acoustic properties. This classification will determine what word the person understands is being uttered, e.g. `tack', `tech', or `tick'.  An important issue in linguistics is how  language users perform this classification.

Exemplar models provide one answer to this question \cite{pierrehumbert}. According to exemplar models, every member of a linguistic community stores a multitude of  detailed memories of every sound that they hear. These memories are called exemplars.  Exemplars consist of detailed acoustic information about the sound, as well as a category label: information about what the sound is classified as.  For example, with the case of vowels, according to exemplar models, every person holds a detailed memory of every vowel they have ever heard, labeled with the corresponding vowel category, \textipa{A}, \textipa{3}, \textipa{I}, etc \cite{jaeger1}. 

Exemplar models provide a theory of both perception and production. When the language user hears a new vowel sound, the sound is compared to other exemplars already in memory and is classified according to the labels of exemplars that it is close to. When the language user needs to produce a new instance of a vowel, they select an exemplar from the set of all exemplars with the appropriate label and utter a copy of it, usually with either noise or bias added.

An important feature of many exemplar models is that exemplars do not remain in memory unchanged forever. 
A popular choice is for each exemplar to have a weight that decays with time \cite{pierrehumbert,wedel2012}. These exemplar weights enter into both perception and production, indicating that certain exemplars are more important for the relevant process than others.  New exemplars are created (every time a new instance is perceived) with some large weight which then decays exponentially with time.  This allows old exemplars to be forgotten and the general population of exemplars in a language user's mind to change with time.


There has been relatively little mathematical analysis of exemplar models. In \cite{tupper2015} the author studies a fairly elaborate exemplar model that is able to account for the phenomenon of sound merger.
The author is able to obtain analytical results by looking at a certain limiting case of the model, a limit in which there are no stochastic fluctuations. In \cite{tupper2015} perceptual boundaries, where language users switch from classifying a stimulus as one sound versus another, approach a stable configuration in perceptual space.
Our intent here is to study the fluctuations of perceptual boundaries within an exemplar model.

Our starting point is an exemplar model that was studied by MacQueen in 1967, originally as an algorithm for $k$-means clustering \cite{macqueen}. We can put MacQueen's work in the context of exemplar models as follows. Suppose an individual has a phonetic space (that is, a space of possible sounds) with $k$ labeled exemplars, one for each of  $k$ categories.   Suppose the individual receives an independent identically distributed (i.i.d.)\ sequence of acoustic inputs that they have to classify into these $k$ categories.  Rather than the criteria for categories being pre-given, the classification is performed on the fly using the exemplars that are already stored. Thus as more exemplars are stored the criteria for classification changes. If we assume that (i) the weights of the exemplars are all equal and do not change with time, (ii) for each category the mean acoustic value of all exemplars in that category is stored, (iii) new exemplars are classified according to which category mean they are closest to, we obtain the MacQueen model. 

To explain MacQueen's \cite{macqueen} results, we recall the definition of a centroidal Voronoi tessellation \cite{voronoi}. Given a set of \emph{generators}, which are just a finite set of points in the space, the Voronoi tessellation is a partitioning of the space where each point is assigned to a cell based on which generator it is closest to. A centroidal Voronoi tessellation of a region is a Voronoi tessellation in which each generator is the centroid (i.e.\ the center of mass) of its cell. Centroidal voronoi tessellations have already been established as being fundamental in some game theoretic models of language \cite{jaeger2}.
MacQueen's result is that in his model the distance between the category means and generators of  centroidal Voronoi tessellations converge to 0.
This implies in turn that the perceptual boundaries of the language-users align with the boundaries of  centroidal Voronoi tessellations. 

Taken as an exemplar model, the MacQueen model deviates from more realistic models of language use and development in several ways. In order to move in the direction of analyzing more realistic exemplar models, our contribution in this paper is to introduce weight decay into MacQueen's model. In our model every exemplar starts with weight 1 and the weight then exponentially decays with time. As we will show, this causes the Voronoi regions to no longer settle down into a stable configuration, but instead randomly fluctuate for all time; this is the main result of Section~\ref{sec:thm1}. We then consider a simple special case of our model for which we perform a quantitative analysis of the motion of the perceptual boundary between two categories.

In Section~\ref{secdef} we formally specify the exemplar model  model we study here.  In Section~\ref{sec:thm1} we go on to investigate a number of properties of the model, and most significantly, prove that when we have decay of exemplar weights then the exemplar means do not converge.  In  Section~\ref{sec:twocbeh} we go on to study our model in a special one-dimensional case with two categories. We provide a probabilistic model for the motion of the perceptual boundary.

\section{The $k$-Means Exemplar Model}
\label{secdef}

We imagine a language user who hears a sequence of sounds and classifies each of the sounds into one of $k$ categories, $k \geq 2$. The acoustic properties of the sound heard at the $n$th time step are denoted by $z_n \in \mathbb{R}^N$. We assume that all $z_n$ lie in a set $E \subseteq \mathbb{R}^N$ that is bounded, convex, closed, and has a non-empty interior.
$E$ corresponds to the space of all phonetically possible sounds.  
The sounds $z_1, z_2, \ldots$ are generated in $E$ independently according to a fixed probability measure $P$. 
  We assume the probability measure $P$ can be written as
\begin{equation} \label{eqn:Pdef}
P(A)=\int_{A} f(x) d x 
\end{equation}
for all measurable $A \subseteq E$, where $f(x)>0$, for all $x\in E$

\begin{figure}[htbp]
  \centering
  \includegraphics[width=13cm]{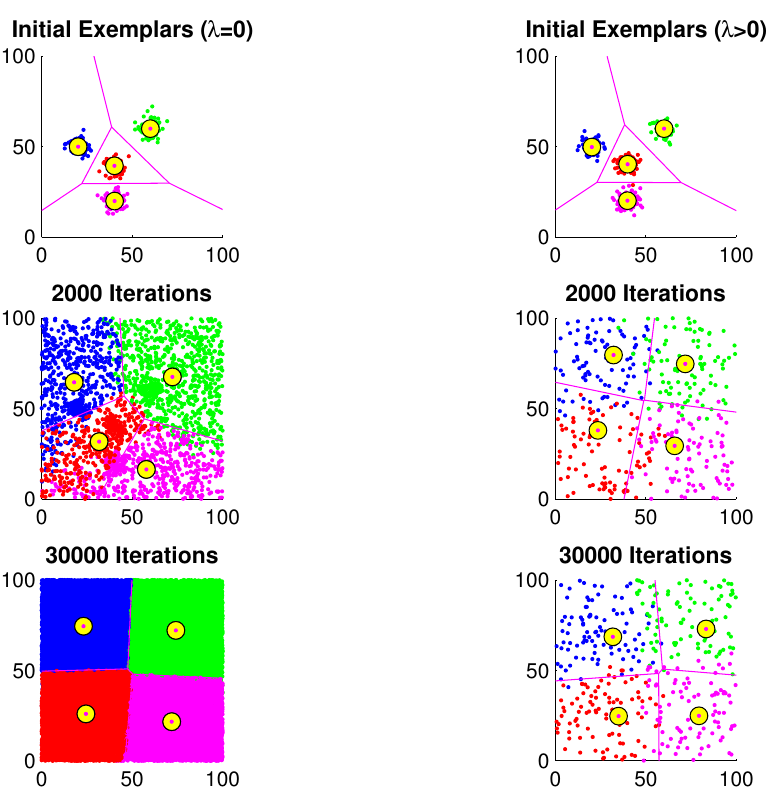}
  \caption{The exemplar dynamics model for $\lambda=0$ (left) and $\lambda=0.05$ (right). Small dots indicate individual exemplars, color-coded to indicate which category they were classified into.  All dots with weight larger than $0.01$ are plotted. The larger yellow dots indicate category means. Magenta lines show the boundaries between the Voronoi regions defined by the  category means. }
  \label{fig:2dmodel_demonstration}
\end{figure}

At the start of the model, we imagine that our language user already has a number of exemplars (each with a corresponding value in $E$) in each of the $k$ categories.
At time $n$, in category $j=1,\ldots,k$, the language user has exemplars with phonetic values  $a_j^{i} \in E$ and weights $v_j^i$ for $i = 1, \ldots, n_j$, where $n_j$ is the number of exemplars in category $j$.  At time step $n$,  a new sound with phonetic parameters $z_n \in E$ is heard.
This sound is stored as an exemplar in one of the language user's categories. Which category the new sound is stored in depends on the category means of all the categories. We define the category means at time step $n$, $x_j^n$ for $j=1,\ldots,k$ to be
\[
x_j^n = \frac{ \sum_{i=1}^{n_j} v_j^i a_i^i }{\sum_{i=1}^{n_j} v_j^i}
\]
and the total category weights by
\[
w_j^n = \sum_{i=1}^{n_j} v_j^i.
\]
We assign the new phonetic value $z_n$ to be an exemplar of category $j$ if for all $\ell$ 
\[
| z_n - x_j^n | \leq | z_n - x_\ell^{n} |.
\]
(In the case of a tie, the exemplar is assigned to the category with the lower index.) The new exemplar is always given an initial weight of 1.

We can specify the update procedure in another way as follows:
New exemplars entering the system are classified according to which Voronoi cell they are in according to the Voronoi tesselation generated by the category means.
In other words, exemplars are assigned to the category mean they are closest to.
 More formally,  if the category means are  $x^n_1, \ldots, x^n_k$
we define the Voronoi cell $S_{i}(x^{n})$ as the set
\begin{equation} \label{eqn:voronoidef}
\begin{split}
S_{i}(x^{n})=\{ \xi :  & \xi \in E, |\xi - x_{i}^{n}| \leq |\xi - x_{j}^{n}| \mbox{ for }j=1,2,\dots,k,  \\
                                 & \mbox{ and } \xi \notin S_{m}(x^{n}) \mbox{, for all } m<i. \}
\end{split}
\end{equation}
$S_{i}(x^{n})$ contains points in $E$ closest to $x_{i}^{n}$ with tied points being assigned to the lower index.  We will let $S_{i}^{n}=S_{i}(x^{n})$ for convenience of notation.

Our system will evolve as an iterative process in the following way.  At each step all weights in the system decay, which we model by multiplying weights by $e^{-\lambda}$, where $\lambda$ is a positive parameter.  In addition, if $z_{n}\in S_{i}^{n}$ then we update the value of the average $x^n_i$ by including $z_n$ in it, and also by increasing the total weight of the category by $1$. So if $z_n \in S_i^n$ we set
\begin{equation*} 
x_{i}^{n+1}=\dfrac{x_{i}^{n} w_{i}^{n} e^{-\lambda}+z_{n}}{w_{i}^{n}e^{-\lambda }+1},   \ \ \ \    w_{i}^{n+1}=w_{i}^{n}e^{-\lambda }+1,
\end{equation*}
\noindent and for all $j\neq i$ we set 
\begin{equation*} 
x_{j}^{n+1}=x_{j}^{n},   \ \ \ \ \  w_{j}^{n+1}=w_{j}^{n}e^{-\lambda}.
\end{equation*}

An important feature of our model is that, as can be seen above, information about individual $a_j^i$ and $v_j^i$ are not needed to update the values of $x_j$ and $w_j$. Given $z_n$, only $x_j^n$ and $w_j^n$ are needed to compute $x_j^{n+1}$ and $w_j^{n+1}$. 
Accordingly, 
we let $x^{n}=(x_{1}^{n},x_{2}^{n},\ldots,x_{k}^{n})$, where $x_{j}^{n}\in E$ represents the weighted mean of category $j$ at time step $n$ with associated weights $w^{n}=(w_{1}^{n},w_{2}^{n},\ldots,w_{k}^{n})$. These weights are the sum of all weights of the exemplars in each category.  We will refer to $x_{j}^{n}$ as the exemplar mean for category $j$.  The parameter $\lambda>0$ is our decay rate.

Initial conditions can be prescribed by giving the locations and weights of each exemplar in each category. However, since individual locations and weights of exemplars do not enter in to the dynamics of the Voronoi cells, we only need to set the category means and category weights.
   For each $j=1,2,\dots , k$, we define category means $x_{j}^{0} \in E$ such that $x_{j}^{0}\neq x_{i}^{0}$ whenever $j\neq i$, and let $w_{j}^{0} > 0$.



\Cref{fig:2dmodel_demonstration} shows what typical runs of the model look like with $\lambda=0$ and $\lambda>0$. In each case the model is initialized with 4 categories. The 4 categories were generated by selecting 4 arbitrary points in the phonetic space (in this case a 100 by 100 square) and distributing $100$ exemplars  with weight 1 about the points according to a  Gaussian distribution with standard deviation $3$. In both cases, at every time step a new exemplar was randomly generated uniformly on the square and assigned to the category of the category mean it is closest to.  In the $\lambda=0$ case exemplars remain at their initial weight for all time, and so the number of exemplars in the system and visible in the plot increases continually. The division of the square into a Voronoi tesselation converges to a centroidal Voronoi tesselation, as shown in \cite{macqueen}. In the $\lambda>0$ case on the right, although added exemplars remain in the system for all time, the weights of the exemplars decrease over time. To help visualize this, we only plot exemplar with weight greater than $0.01$. Accordingly, the number of exemplars in the plot converges to a steady state. However, the Voronoi cells now fluctuate for all time, as we show in the next section.

%

\section{General Results for $k$-Means Exemplar Model}
\label{sec:thm1}

In this section, we prove for the model described in Section~\ref{secdef} that none of the exemplar means converge in probability, and that the categories do not collapse.
In particular, our main result is the following.

\begin{thm}\label{thm:bigBeginningSection}
Let $E \subseteq \mathbb{R}^N$   be  a bounded, convex, closed subset having non-empty interior. Let $\lambda>0$ and 
$f(x)>0$ for all $x \in E$. Let initial conditions be $w^0_1, \ldots, w^0_k  > 0$, and   $x^0_1, \ldots x^0_k \in E$,  where $x^0_i \neq x^0_j$ for $i\neq j$. Let $z_n, n\geq 1$ be an independent sequence of random variables each with distribution given by $P$ as defined in \eqref{eqn:Pdef}.   Define $x_j^n, w_j^n$ for $n >0$ by: if $i$ is the minimal index such that $|x_i^n - z_n|$ is minimized, then
\begin{equation} \label{eqn:exempincat}
x_{i}^{n+1}=\dfrac{x_{i}^{n} w_{i}^{n} e^{-\lambda}+z_{n}}{w_{i}^{n}e^{-\lambda }+1},   \ \ \ \    w_{i}^{n+1}=w_{i}^{n}e^{-\lambda }+1,
\end{equation}
and for all $j\neq i$ we set 
\begin{equation} \label{eqn:exempoutcat}
x_{j}^{n+1}=x_{j}^{n},   \ \ \ \ \  w_{j}^{n+1}=w_{j}^{n}e^{-\lambda}.
\end{equation}
Then, for each $j$,
\begin{enumerate}
\item $x^n_j$ does not converge in probability to any random variable $x_j$  
as $n \rightarrow \infty$,
\item the volume of  $S^n_j$ (defined in \eqref{eqn:voronoidef}) does not converge to zero in probability as $n \rightarrow \infty$,
\item almost surely, $z_n \in S^{n}_j$ for infinitely many $n$.
\end{enumerate}
\end{thm}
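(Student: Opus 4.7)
My plan is to prove~(3) first and then derive~(1) and~(2). I will use three preliminary observations throughout: each $x_j^n$ remains in $E$ since the update is a convex combination of points in $E$; the weights are uniformly bounded by $W := w_j^0 + 1/(1-e^{-\lambda})$, so the denominator in the update formula is bounded; and consequently, whenever category $j$ is updated at time $n$, $|x_j^{n+1}-x_j^n|\geq |z_n-x_j^n|/W$. I will also use that $w_j^n\to 0$ if and only if category $j$ is updated only finitely often, which is immediate from the update rule $w_j^{n+1}=w_j^n e^{-\lambda}+\mathbf{1}_{\{z_n\in S_j^n\}}$.

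\textbf{Part (3).} I argue by contradiction: suppose the event $A=\{j\text{ is updated only finitely often}\}$ has positive probability. On $A$, there is a random $N$ beyond which $x_j^n=\hat x_j$ is constant in $E$ and $w_j^n\to 0$. Let $D_n=\min_{\ell\neq j}|x_\ell^n-\hat x_j|$. The cell $S_j^n$ contains the set $B(\hat x_j,D_n/2)\cap E$; since $E$ is convex with nonempty interior and $\hat x_j\in E$, this set has Lebesgue measure bounded below by a nondecreasing function of $D_n$. Thus if $\liminf_n D_n>0$ almost surely on $A$, then $\int_{S_j^n}f\,dx\geq c>0$ for all large $n$, and L\'evy's conditional Borel--Cantelli lemma yields $z_n\in S_j^n$ infinitely often almost surely on $A$, contradicting the definition of $A$. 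The key sub-claim remaining is $\liminf_n D_n>0$ almost surely on $A$.

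\textbf{Key sub-claim and main obstacle.} The main technical obstacle is ruling out $x_\ell^n\to\hat x_j$ for any $\ell\neq j$. When $x_\ell^n$ is close to $\hat x_j$, the cell $S_\ell^n$ locally approximates the half-space containing $x_\ell^n$ bounded by the perpendicular bisector of $\hat x_j$ and $x_\ell^n$; since $f>0$ on $E$ and $E$ has nonempty interior, the conditional mean $\mathbb{E}[z_n\mid z_n\in S_\ell^n]$ is bounded away from $\hat x_j$ in the direction of $x_\ell^n-\hat x_j$ by a positive amount depending on $|x_\ell^n-\hat x_j|$ and the geometry of $E$. Since $x_\ell^{n+1}$ is a convex combination of $x_\ell^n$ and $z_n$, the expected update pushes $x_\ell$ away from $\hat x_j$, and a Lyapunov or supermartingale estimate on $|x_\ell^n-\hat x_j|$ (or on $-\log|x_\ell^n-\hat x_j|$) should then rule out convergence. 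Care is needed for the varying direction of approach, the decaying weights, and the possibility that $\hat x_j$ lies on the boundary of $E$.

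\textbf{Parts (1) and (2).} For~(1), suppose $x_j^n\to X$ in probability. Since $x_j^n$ is bounded, convergence also holds in $L^1$, so $\mathbb{E}|x_j^{n+1}-x_j^n|\to 0$. The lower bound $|x_j^{n+1}-x_j^n|\geq \mathbf{1}_{\{z_n\in S_j^n\}}|z_n-x_j^n|/W$ then forces $\mathbb{E}\int_{S_j^n}|y-x_j^n|f(y)\,dy\to 0$; but an unconditional version of the sub-claim from~(3)---with $\hat x_j$ replaced by $x_j^n$---shows instead that this integral is bounded below in expectation for infinitely many $n$, a contradiction. For~(2), if $|S_j^n|\to 0$ in probability, uniform integrability of $f$ on the compact $E$ gives $\int_{S_j^n}f\,dx\to 0$ in probability; the same unconditional sub-claim yields a positive lower bound on $\int_{S_j^n}f\,dx$ with positive probability infinitely often, producing the contradiction.
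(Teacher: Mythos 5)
There is a genuine gap, and it sits exactly where you flagged "the main technical obstacle": the key sub-claim that no other mean $x_\ell^n$ converges to $\hat x_j$ (and its unconditional, quantitative analogues invoked in your treatment of (1) and (2)) is never proved, only sketched. Everything else in your outline is routine bookkeeping that hinges on this claim. The supermartingale idea as stated faces concrete difficulties: (a) if $\hat x_j\in\partial E$, the cell $S_\ell^n$ near $\hat x_j$ is a half-space intersected with $E$, and the conditional mean $\mathbb{E}[z_n\mid z_n\in S_\ell^n]$ need not be displaced from $\hat x_j$ "in the direction of $x_\ell^n-\hat x_j$" by an amount bounded below --- the relevant slab of $E$ can be thin in precisely that direction; (b) if several means $x_{\ell_1}^n,x_{\ell_2}^n,\dots$ cluster near $\hat x_j$ simultaneously, $S_\ell^n$ is nothing like a half-space and the one-step drift computation breaks down; (c) for parts (1) and (2) you need not merely "$x_\ell^n\not\to x_j^n$" along some subsequence but a uniform-in-$n$ lower bound, holding with probability bounded away from zero, on the separation of \emph{all} means from each other \emph{and} from $\partial E$ --- a much stronger statement than the liminf claim in your part (3), and one that a pathwise supermartingale argument for a single pair $(\ell,j)$ does not deliver.

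For comparison, the paper avoids drift analysis entirely. Its core construction (Lemmas \ref{balllemma}--\ref{backgthm2} and Theorems \ref{awayexempthm}, \ref{awayboundarythm}, \ref{areathm}) exhibits, from an \emph{arbitrary} configuration at an arbitrary time $n$, an explicit finite sequence of $M$ events of the form $z_m\in B(y_m,\eps)$ --- each with probability bounded below uniformly by Lemma \ref{nonempty_ball2} --- that deterministically drags the means, one at a time by an inductive "peeling" argument, into a set $C_\eps$ where all means are at least $\eps$ from each other and from $\partial E$. This yields $\mathbf{P}(x^{n+M}\in C_\eps\mid\mathcal{F}_n)>H$ uniformly in $n$ and in the starting configuration, which is exactly the uniform non-degeneracy your argument needs; combined with Lemma \ref{lem:handy} it gives (1) and (2) directly, and (3) follows. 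If you want to complete your write-up, the realistic path is to prove something equivalent to Theorem \ref{areathm}; the Lyapunov route would require substantial new estimates to handle the boundary and multi-mean clustering cases, and it is not clear it can be made to work.
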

\begin{proof} Result 1 is Corollary~\ref{cor:final} and Result 2 is implied by Theorem~\ref{areathm} and Lemma~\ref{lem:handy}, both of which are proved below. Result 3 follows from Result 1, since not converging weakly implies not converging almost surely, and the only way $x^n_j$ can move is if $z_n \in S^n_j$.
\end{proof}


Note that not converging in probability is a stronger condition than not converging almost surely or in mean. We conjecture that for each $j$ there \emph{is} convergence in distribution of $x_j^n$ to some random variable as $n \rightarrow \infty$, but we do not tackle this issue here.

The key to proving all three results is showing that for all sufficiently large $n$ with probability bounded away from zero, all exemplar means $x_j^n$ are separated from each other and from the boundary of $E$. This is an immediate consequence of Theorem~\ref{areathm} below.



To prove the results, we will first require some preliminary results, beginning with the following lemma.

\begin{lemma} There exists a $\gamma \in \R$ depending only on $\lambda$ and the initial vector of weights $w^{0}$, such that $w_{i}^{n}\leq \gamma$, for $i=1,2,\dots , k$, and for all $n\geq 0$.
\label{lem:weightbound}
\end{lemma}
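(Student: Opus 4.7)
The plan is to control $w_i^n$ by the simple scalar recursion obtained when we upper-bound the two cases of the weight update by a single inequality. Observe that whether or not $z_n \in S_i^n$, the weight update gives
\[
w_i^{n+1} \;\leq\; w_i^n e^{-\lambda} + 1,
\]
since in the ``assigned'' case we have equality by \eqref{eqn:exempincat}, while in the ``not assigned'' case we have $w_i^{n+1} = w_i^n e^{-\lambda} \leq w_i^n e^{-\lambda} + 1$ by \eqref{eqn:exempoutcat}.

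Next I would introduce the dominating sequence $y_n$ defined by $y_0 = \max_{i} w_i^0$ and $y_{n+1} = y_n e^{-\lambda} + 1$, and show by induction that $w_i^n \leq y_n$ for every $i$ and every $n \geq 0$. The base case holds by the choice of $y_0$, and the inductive step is immediate from the displayed inequality above together with the monotonicity of the map $y \mapsto y e^{-\lambda} + 1$.

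Solving the linear recursion gives the closed form
\[
y_n \;=\; e^{-n\lambda} y_0 + \frac{1 - e^{-n\lambda}}{1 - e^{-\lambda}},
\]
which is a convex combination of $y_0$ and $(1-e^{-\lambda})^{-1}$. Therefore $y_n \leq \max\!\left(y_0,\,(1-e^{-\lambda})^{-1}\right)$ for every $n$, and setting
\[
\gamma \;=\; \max\!\left(\max_{1 \le i \le k} w_i^0,\ \frac{1}{1 - e^{-\lambda}}\right)
\]
yields the desired uniform bound. This $\gamma$ depends only on $\lambda$ and $w^0$, as required.

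There is no real obstacle here: the only thing to be careful about is checking that the ``not assigned'' branch is genuinely dominated by the ``assigned'' branch so that a single deterministic recursion upper-bounds $w_i^n$ regardless of the random history of category assignments. Once that observation is in hand, the rest is a one-line induction plus the closed-form solution of a scalar affine recursion.
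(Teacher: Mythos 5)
Your proof is correct and rests on the same elementary observation as the paper's: the weights obey the affine recursion $u_{n+1}=u_n e^{-\lambda}+1$, whose iterates stay below $\max\bigl(u_0,(1-e^{-\lambda})^{-1}\bigr)$. The only (immaterial) difference is that the paper applies the exact recursion to the total weight $W^n=\sum_i w_i^n$ and then bounds each $w_i^n$ by $W^n$, whereas you dominate each $w_i^n$ individually by the inequality $w_i^{n+1}\leq w_i^n e^{-\lambda}+1$, giving a marginally tighter $\gamma$ with $\max_i w_i^0$ in place of $W^0$.
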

\begin{proof}
If we let $W^{n}$ represent the total weight of our system at time step $n$, it is straightforward to show $W^{n}=\sum_{i=1}^{k}w_{i}^{n}=W^{n-1}e^{-\lambda}+1$ for all realizations. 
Since $e^{-\lambda} <1$ we know that $W^n$ converges to $W:= (1-e^{-\lambda})^{-1}$.
Since $W^{n}$ converges monotonically to W,
\beqa
W^{n}\leq \max \squigbr{W^{0}, \dfrac{1}{1-e^{-\lambda}} }=\gamma,
\label{weightbound}
\eeqa
for all $n$.
This in turn implies the result.
\end{proof}

Lemma \ref{lem:weightbound} proves the total weight of all exemplars in the system is uniformly bounded above. This is in contrast to the MacQueen model where the total weight of the system diverges, and new exemplars have less influence every iteration.
The value $W$ (that $W^{n}$ converges to) will come up later when we investigate the long term behaviour of the perceptual boundary.

The following lemma shows that for a fixed $r$ the probability of a new exemplar $z_n$ landing in a ball of radius $r$ centred at a point $x \in E$ is bounded away from 0, uniformly with respect to $x$ in bounded sets.

\begin{lemma}
If $r>0$ is fixed, and  closed $F\subseteq E$  then 
\beq
\inf_{x\in F}P(B(x,r) \cap E)>0.
\eeq
\label{nonempty_ball2}
\end{lemma}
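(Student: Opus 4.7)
The plan is to treat $g \colon x \mapsto P(B(x,r) \cap E)$ as a continuous function on $F$, show that $g(x) > 0$ at every point of $F$, and then invoke compactness. Since $E$ is bounded and $F \subseteq E$ is closed, $F$ is closed and bounded in $\mathbb{R}^N$ and hence compact; a strictly positive continuous function on a compact set attains a positive minimum. The substance of the proof therefore reduces to continuity of $g$ together with the pointwise bound $g(x) > 0$ for $x \in F$.

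For continuity, I would apply dominated convergence: if $x_n \to x$, then $\mathbf{1}_{B(x_n,r) \cap E}(y) \to \mathbf{1}_{B(x,r) \cap E}(y)$ at every $y$ off the sphere $\{y : |y - x| = r\}$, which has Lebesgue measure zero, and the integrand is uniformly dominated by the integrable density $f$. So $g(x_n) \to g(x)$.

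For positivity at a fixed $x \in F$, I would produce a non-empty open set $U \subseteq B(x,r) \cap E$; since $f > 0$ pointwise on $U$, the sets $U \cap \{f > 1/n\}$ increase to $U$, forcing one of them to have positive Lebesgue measure and hence $\int_U f\,dy > 0$. To build $U$, pick any $y_0 \in \mathrm{int}(E)$ (non-empty by hypothesis); convexity of $E$ implies the segment from $x$ to $y_0$ lies in $E$, and the standard fact that $(1-t)x + t y_0 \in \mathrm{int}(E)$ for $0 < t \leq 1$ whenever $y_0 \in \mathrm{int}(E)$ yields an interior point $y$ of $E$ with $|y - x| < r/2$; a sufficiently small Euclidean ball around $y$ is then the desired $U$, and it is automatically contained in $B(x,r) \cap E$.

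I expect the main (though mild) obstacle to be this last convex-geometry step, which is precisely where the non-empty interior assumption on $E$ gets used — without it, $B(x,r) \cap E$ could have Lebesgue measure zero even though $x \in E$ (consider a lower-dimensional slice). Once that is in hand, dominated convergence and compactness close the argument routinely, and in fact yield the slightly stronger statement that $g$ is continuous on all of $E$, so the conclusion $\inf_{x\in F} g(x) > 0$ holds for any compact $F \subseteq E$.
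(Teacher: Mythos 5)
Your proof is correct, but it takes a genuinely different route from the paper's. The paper first constructs a \emph{uniform} inner radius $r'>0$ such that every set $B(x,r)\cap E$ with $x\in F$ contains a ball $B(x'(x),r')$ --- this is done by scaling a fixed interior ball $B(x_0,r_0)\subseteq E$ toward $x$ via $(1-\alpha)B_0+\alpha x$ and using boundedness of $F$ to choose one scaling factor $\alpha'$ that works simultaneously for all $x\in F$ --- and then proves $\inf_{x\in F}\int_{B(x'(x),r')}f>0$ by contradiction, extracting a convergent subsequence from a minimizing sequence. You instead establish that $g(x)=P(B(x,r)\cap E)$ is continuous (dominated convergence, using that the sphere $\{|y-x|=r\}$ is Lebesgue-null and $f$ is an integrable dominating function) and strictly positive at each point of $F$, and then invoke the extreme value theorem on the compact set $F$. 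The pointwise-positivity step in your argument uses the same essential geometric input as the paper's uniform construction (convexity plus non-empty interior yield an open ball inside $B(x,r)\cap E$, on which $f>0$ forces positive mass), but you only need it at a single point at a time, since continuity plus compactness does the uniformization for you. Your version is cleaner and yields the continuity of $g$ on all of $E$ as a byproduct; the paper's version is more hands-on and produces an explicit uniform inner radius $r'$, which is closer in spirit to how such balls are used elsewhere in the argument, though its final lower bound is still obtained non-constructively. One small point worth making explicit if you write this up: the convex-geometry fact you cite, that $(1-t)x+ty_0\in\mathrm{int}(E)$ for $t\in(0,1]$ when $y_0\in\mathrm{int}(E)$ and $x\in E$, deserves a one-line justification (e.g.\ $B(y_0,\rho)\subseteq E$ implies $(1-t)x+tB(y_0,\rho)\subseteq E$ by convexity, which is a ball of radius $t\rho$ about $(1-t)x+ty_0$); this is essentially the computation the paper carries out with its sets $B_\alpha$.
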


\begin{proof}
We first want to show if $r>0$ is fixed, there exists an $r'>0$ such that for all $x\in F$ one can find a $x' \in E$ such that $B(x',r')\subseteq B(x,r)\cap E$.

Let $B_{0}=B(x_0,r_0)$ be a subset of $E$, where $r_0 >0$.  We know $B_{0}$ exists because $E$ has a non-empty interior.  Fix an $x$ in $F$. For any $\alpha \in [0,1)$, the set ${B_{\alpha}:=(1-\alpha)B_{0}+\alpha x}$ is an open ball which is contained in $E$ because  $E$ is convex.  Furthermore, $B_{\alpha}$ is centred at point $x_0+\alpha (x-x_0)$ and has radius $(1-\alpha)r_0$.

We want to find all such $\alpha$ so that $B_\alpha$ is completely within $B(x,r)$. This containment will hold  for $\alpha \in [0,1)$ if and only if
\[
(1-\alpha) |x - x_0| + (1-\alpha) r_0 \leq r
\]
or, rearranging,
\[
\alpha \geq 1 - \frac{r}{|x-x_0| + r_0}.
\]
Let $\alpha' := \max \left( 0, \sup_{x \in F} 1 - \frac{r}{|x-x_0| + r_0}\right)$. Since $F$ is bounded, $\alpha'<1$. Then $B_{\alpha'} = (1-\alpha')B_0 + \alpha' x$ is contained in $B(x,r)$ for all $x \in F$.  But $B_{\alpha'}$ is a ball of radius $r' = (1-\alpha') r_0$ and so we proved the existence of such an $r'$.

We have now shown that 
\beq
P(B(x,r)\cap E)\geq \int_{B(x,r)\cap E}f(y) dy \geq \int_{B(x'(x),r')}f(y) dy,
\eeq
where $x'(x) = x_0 + \alpha' (x- x_0)$.
To establish our result, we just need to show that
\beq
\inf_{x \in F}\int_{B(x'(x),r')}f(y) dy>0. 
\eeq

Suppose for contradiction that ${\inf_{x \in F}\int_{B(x'(x),r')}f(y) dy=0}$.  There must exist a sequence $\{ z_{n} \}_{n>0}$ in $F$ such that $\lim_{n\rightarrow \infty}\int_{B(x'(z_{n}),r')}f(y) dy=0$.
 Because $F$ is bounded, there exists a subsequence such that $z_{n_{i}}\rightarrow z\in E$, as $i\rightarrow\infty$. 
 Furthermore, since $B(x'(z_{n_i}),r') \subseteq E$, and $E$ is closed, $B(x'(z),r')$ is also a subset of $E$, where we have used the fact that $x'$ is a continuous function of $x$.
  We know that for all $\eps>0$, there exists an $I$, such that for all $i\geq I$, $|z_{n_{i}}- z|<\eps$.  Let $\eps=r'/2$, implying that $B(x'(z),\eps)\subseteq B(x'(z_{n_{i}}),r')$ for all $i\geq I$.  
 So for all $i \geq I$ we have 
\beq
\int_{B(x'(z_{n_{i}}),r')}f(x) dx\geq \int_{B(x'(z),\eps)}f(x) dx>0
\eeq
since $f$ is non-zero inside $E$, and $B(x'(z),\eps)$ is contained in $E$.
This gives us our contradiction, since we know the left hand side converges to zero as $i \rightarrow \infty$.
\end{proof}

The following lemma shows that if a long enough sequence of new exemplars arrive within $\eps$ of a point, then some category mean will arrive within $2 \eps$ of the point.


\begin{lemma} 
Let $z \in E$ and $\eps>0$ be given. There exists a $p>0$, such that if  $z_{q}$ is in $B(z,\eps)$ for $n\leq q \leq n+p-1$, then for some  $m \in \{ 1, \hdots , k \}$,  the exemplar mean $x_{m}^{n+p}\in B(z, 2\eps)\cap E$.  Parameter $p$ only depends on $k$,
$\mathrm{diam}(E)$ (the diameter of set $E$) and 
$\eps$.
\label{balllemma}
\end{lemma}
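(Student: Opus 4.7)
The plan is to combine a pigeonhole argument across the $k$ categories with a one‐step contraction estimate for the update rule \eqref{eqn:exempincat}. Among the $p$ incoming sounds $z_n,\dots,z_{n+p-1}\in B(z,\eps)$, each is deposited in exactly one category, so there is some index $m$ (depending on the realization) that receives at least $\lceil p/k\rceil$ of these sounds. Since the value of $x_m^\cdot$ only changes at times when category $m$ is updated (on other steps only $w_m$ decays, per \eqref{eqn:exempoutcat}), it suffices to show that after sufficiently many updates to $m$ coming from $B(z,\eps)$, the mean $x_m$ is forced into $B(z,2\eps)$. Convexity of $E$ combined with the fact that each $x_m^{q+1}$ is a convex combination of $x_m^q\in E$ and $z_q\in E$ guarantees $x_m^{n+p}\in E$ automatically, so only the distance estimate needs attention.

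For the contraction step, suppose $z_q\in B(z,\eps)$ triggers an update to category $m$, and let $w = w_m^q e^{-\lambda}$, which by Lemma~\ref{lem:weightbound} satisfies $w\leq \gamma e^{-\lambda}$. Rewriting \eqref{eqn:exempincat} as
\[
x_m^{q+1}-z=\frac{w(x_m^q-z)+(z_q-z)}{w+1},
\]
the triangle inequality gives $|x_m^{q+1}-z|\leq (w|x_m^q-z|+\eps)/(w+1)$. I would then split into two cases. If $|x_m^q-z|\leq 2\eps$, then $|x_m^{q+1}-z|\leq \eps(2w+1)/(w+1)\leq 2\eps$, so $B(z,2\eps)$ is invariant under updates from $B(z,\eps)$. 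If instead $|x_m^q-z|>2\eps$, then $\eps<|x_m^q-z|/2$, and the bound simplifies to $|x_m^{q+1}-z|\leq |x_m^q-z|\,(w+\tfrac12)/(w+1)\leq \rho\,|x_m^q-z|$, where $\rho := 1-\tfrac{1}{2(\gamma e^{-\lambda}+1)}\in(0,1)$ depends only on $\gamma$ and $\lambda$. Thus each update to $m$ either leaves $x_m$ in $B(z,2\eps)$ (and there it stays) or shrinks $|x_m-z|$ by the fixed factor $\rho$.

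Iterating, if category $m$ receives $s$ updates from $B(z,\eps)$, then either at some intermediate step the mean entered $B(z,2\eps)$ and remained there, or the distance satisfies $|x_m-z|\leq \rho^s\,\mathrm{diam}(E)$. Choosing $s\geq \lceil \log(\mathrm{diam}(E)/(2\eps))/\log(1/\rho)\rceil$ forces $|x_m-z|\leq 2\eps$ in either case, and then setting $p:=k\,s$ guarantees via pigeonhole that some category receives at least $s$ updates during the interval, yielding $x_m^{n+p}\in B(z,2\eps)\cap E$. The resulting $p$ depends on $k$, $\mathrm{diam}(E)$, $\eps$, and (through $\rho$) on the fixed model parameters $\lambda$ and the initial weight bound $\gamma$. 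The main technical point to get right is the contraction estimate in the case of large weight $w$: one must verify that $\rho<1$ is strictly bounded away from $1$ uniformly, which is exactly what the a priori weight bound from Lemma~\ref{lem:weightbound} provides; the seemingly delicate issue that updates to other categories interleave the updates to $m$ is in fact harmless, because such intervening steps leave $x_m$ unchanged and only decrease $w_m$, which if anything strengthens (never weakens) the next contraction.
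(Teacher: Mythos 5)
Your proof is correct and follows essentially the same route as the paper's: pigeonhole over the $k$ categories to find one receiving at least $p/k$ of the incoming sounds, then a per-update contraction estimate on $|x_m^q - z|$ whose rate is bounded away from $1$ via the uniform weight bound of Lemma~\ref{lem:weightbound}. Your case-split (invariance of $B(z,2\eps)$ versus geometric contraction outside it) is just a minor repackaging of the paper's summed recursion $y^{q+1}\leq \rho y^q + (1-\rho)\eps$, and your remark that $p$ implicitly depends on $\lambda$ and the weight bound applies equally to the paper's own argument.
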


\begin{proof}

Let's assume that $z_{q}$ is in $B(z,\eps)$ for $q \in \{n, \ldots, n+kp'-1\}$, for some $p'$ we will determine later.
We know that one category $m \in \{ 1, \hdots , k \}$ will be classified at least $p'$ times over that time interval.  So there exists a subsequence $\{ q_{i} \}_{i\geq 1} \subseteq \{ n, \hdots , n+ kp'-1 \} $ such that $z_{q_{i}}\in S_{m}^{q_{i}}$ for all $i$, and $| \{ q_{i} \}_{i\geq 1}|\geq p'$.  Note that in this time interval 
the exemplar means either stay fixed or move closer to $z$, if they are not already in the ball $B(z,\eps)$.
 We want to show there is a $p$ large enough such that we are guaranteed the $m$th exemplar mean will be inside $B(z,2 \eps)$ by the time $n+kp'$.

Let $\eta:= \min_{i } |x_i^n - z|$ be the distance of the closest exemplar mean to $z$ at time $n$.  Note that since new exemplars only arrive at locations in $B(z,\eps)$, only exemplars within $\eta + \eps$ of $z$ will move. In particular, $|x_m^{n} - z| \leq \eta + \eps$.

Without loss of generality, let us assume $z$ is at $0$, so that $|z_q| \leq \eps$ for all $q$. 
Let $y^q: = |x_{m}^q|$.

If $z_{q}\in S_{m}^{q}$, then
\beq
x_m^{q+1} =\dfrac{x_m^{q} w_{m}^{q} e^{-\lambda}+z_{q}}{w_{m}^{q}e^{-\lambda}+1}.
\eeq
Let  $\rho =  w_{m}^{q} e^{-\lambda}/(w_{m}^{q}e^{-\lambda}+1)$. Note that $\rho \in (0,1)$ and is bounded away from 1, because of the bound on the total weights provided by Lemma \ref{lem:weightbound}. So
\[
x_m^{q+1} = \rho x_m^{q} +  (1- \rho) z_{q}
\]
which implies
\begin{eqnarray*}
y^{q+1} & \leq & \rho  y^{q} + (1- \rho)|z_{q}| \\
& \leq &\rho y^q + (1-\rho) \eps.
\end{eqnarray*}
If $z_q \not\in S_{m}^{q}$ then $y^{q+1}=y^q$.  

We know that new exemplars will be classified in category $m$ at least $p'$ times.
So, using the inequality version of the identity for geometric series:
\[
y^{n+kp'} \leq \rho^{p'} y^n +  \frac{1- \rho^{p'}}{1 - \rho} (1-\rho)  \eps \leq  \rho^{p'} (\eta + \eps) + (1- \rho^{p'}) \eps \leq \rho^{p'} \mbox{diam}(E) + \eps.
\]
The limit  of the right-hand side as $p' \rightarrow \infty$ is $\eps$, so there is a large enough $p'$ so that $y^{n+kp} < 2 \eps$, and hence $x_{m}^{n+kp'} \in B(z, 2 \eps)$. Let $p = k p'$ gives the required result. 
\end{proof}

Lemma \ref{balllemma} will be used to prove that the exemplar means don't converge towards one another, and as such, there is no collapse in the system.  How it will be utilized will become apparent in the following lemma. Here we show if a collection of one or more exemplar means are close to a point $z$ in the interior of $E$, with positive probability  one of the exemplar means will be moved away from $z$ in a bounded number of steps. Meanwhile, all the exemplar means that are far away from $z$ will not be moved.

In what follows, we will use $\partial E$ to denote the boundary of $E$, and for any subset $F$ of $\mathbb{R}^N$, $d(z,F)$ to denote the distance between point $z \in \mathbb{R}^N$ and $F$:
\[
d(z,F) = \inf_{x \in F} |z-x|.
\]

\begin{lemma} 
Let $\delta>0$, $z \in E$ with $\delta \leq d(z,\partial E)$, $\eps$ be a constant such that $\eps \leq \delta/10$, and
\begin{eqnarray*}
A_{1}&=&\{ i, \mbox{ s.t. }|x_{i}^{n_{0}}-z|\geq \delta \}, \\
A_{2}&=&\{ i, \mbox{ s.t. }|x_{i}^{n_{0}}-z| < \delta/2-4\eps, \mbox{ and }x_{i}^{n_{0}}\neq z\},  \\
A_{3}&=&\{ i, \mbox{ s.t. }x_{i}^{n_{0}}=z \}.
\end{eqnarray*}
If $|A_{1}|+|A_{2}|+|A_{3}|=k$ (so there are no exemplars between distances $\delta/2-4\eps$ and $\delta$ from point $z$), then there exists a $y\in B(z,\delta/2)$, and a $p>0$, such that if $z_{n}\in B(y,\eps)$ for $n_{0}\leq n < n_{0}+p$, then $\max_{i\in A_{2}}|x_{i}^{n_{0}+p}-z|\geq 
\delta/2 -4\eps$, and $x_{i}^{n_{0}+p}=x_{i}^{n_{0}}$ for all $i\in A_{1}\cup A_{3}$.
\label{backgthm1}
\end{lemma}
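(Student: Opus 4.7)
The plan is to funnel all new exemplars into a small ball $B(y,\eps)$ placed far along the ray from $z$ through one of the $A_2$ means, so that this $A_2$ mean gets pushed outward while no $A_1\cup A_3$ mean is ever selected. Assuming $A_2$ is nonempty (otherwise the conclusion is vacuous), I pick any $m\in A_2$ and set $r_m:=|x_m^{n_0}-z|\in(0,\delta/2-4\eps)$, $u:=(x_m^{n_0}-z)/r_m$, and $y:=z+(\delta/2-\eps)u$. Then $y\in B(z,\delta/2)$, and since $d(z,\partial E)\geq\delta$ we get $d(y,\partial E)\geq\delta/2+\eps>\eps$, so $B(y,\eps)\subseteq E$.

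The geometric heart of the proof will be the claim that for every $\xi$ in the convex hull $K:=\mathrm{conv}(\{x_m^{n_0}\}\cup B(y,\eps))$ and every $z_q\in B(y,\eps)$,
\[
|z_q-\xi|<|z_q-z|.
\]
Translating so that $z=0$, this is equivalent to $y\cdot\xi>|\xi|^2/2+\eps|\xi|$, obtained by minimizing $z_q\cdot\xi$ over $z_q\in B(y,\eps)$. The right-hand side is convex in $\xi$, so the admissible set $\Xi:=\{\xi:y\cdot\xi>|\xi|^2/2+\eps|\xi|\}$ is convex, and $K\subseteq\Xi$ reduces to checking the two generators of $K$: at the vertex $x_m^{n_0}=r_m u$ the inequality becomes $r_m<\delta-4\eps$ (true since $r_m<\delta/2-4\eps$), and for $\xi\in B(y,\eps)$ a short expansion reduces it to $\delta^2/8-\delta\eps>0$ (true since $\delta\geq10\eps$).

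With this in hand I induct on $n\in[n_0,n_0+p)$ under the hypothesis that no $A_1\cup A_3$ mean has been updated through time $n$. Because $x_m^n$ is a weighted average of $x_m^{n_0}$ and past exemplars classified to $m$ (all of which lie in $B(y,\eps)$), we have $x_m^n\in K\subseteq\Xi$, so $|z_q-x_m^n|<|z_q-z|\leq\delta/2$ for every $z_q\in B(y,\eps)$; together with $|z_q-x_i|\geq\delta-|z_q-z|\geq\delta/2$ for $i\in A_1$ and $|z_q-x_i|=|z_q-z|$ for $i\in A_3$, this gives the strict inequality $|z_q-x_m^n|<|z_q-x_i|$ for every $i\in A_1\cup A_3$. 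Hence the closest mean to the new exemplar $z_n\in B(y,\eps)$ lies in $A_2$, no $A_1\cup A_3$ mean is updated at time $n$, and the induction advances. Applying Lemma~\ref{balllemma} with $y$ in place of $z$ then produces a $p$, depending only on $k$, $\mathrm{diam}(E)$, and $\eps$, for which some mean $x_j^{n_0+p}$ lies in $B(y,2\eps)$; since the $A_1\cup A_3$ means have not moved and sit outside $B(y,2\eps)$, we must have $j\in A_2$, yielding $|x_j^{n_0+p}-z|\geq|y-z|-2\eps=\delta/2-3\eps\geq\delta/2-4\eps$.

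The main subtlety will be the small-$r_m$ regime $r_m\leq 2\eps$: there the naive estimate $|z_q-x_m^n|\leq\delta/2-r_m$ no longer strictly beats $|z_q-z|\geq\delta/2-2\eps$, so separation from $A_3$ means cannot be read off from radius comparisons and has to be extracted from the bisector geometry instead. This is precisely what the convex-set characterization via $\Xi$ encodes uniformly in $r_m$, and is the reason the proof routes through the global inequality $K\subseteq\Xi$ rather than through direct distance bounds.
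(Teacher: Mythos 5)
Your proof is correct and follows the same overall strategy as the paper's: place a small target ball $B(y,\eps)$ on the ray from $z$ through an $A_2$ mean at distance roughly $\delta/2$, argue that exemplars landing there are never classified into a category of $A_1\cup A_3$, and then invoke Lemma~\ref{balllemma} to pull some $A_2$ mean into $B(y,2\eps)$, hence to distance at least $\delta/2-4\eps$ from $z$. The inessential differences are that the paper takes the \emph{farthest} $A_2$ mean and puts $y$ at distance $\delta/2-2\eps$, while you take an arbitrary $m\in A_2$ and use $\delta/2-\eps$; both satisfy $y\in B(z,\delta/2)$ and both yield the required bound. The substantive difference is in the key classification step: the paper merely \emph{asserts} that $x_q^n$ remains closer to every point of $B(y,\eps)$ than $z$ is and than every $A_1$ mean is, whereas you actually prove this, by noting that $x_m^n$ stays in the convex hull $K=\mathrm{conv}(\{x_m^{n_0}\}\cup B(y,\eps))$ and that the set $\Xi=\{\xi: y\cdot\xi>|\xi|^2/2+\eps|\xi|\}$ of points uniformly beating $z$ is convex, so it suffices to verify the generators (your computations $r_m<\delta-4\eps$ and $\delta^2/8-\delta\eps>0$ are correct under $\eps\le\delta/10$). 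This buys genuine rigor where the paper is thin: as you observe, when $|x_m^{n_0}-z|\le 2\eps$ the naive triangle-inequality comparison against the $A_3$ means (which sit exactly at $z$) fails, and the bisector geometry encoded in $K\subseteq\Xi$ is what actually closes the argument, uniformly over the whole time interval rather than just at $n_0$.
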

\begin{proof}
Let $q=\argmax_{i\in A_{2}}|x_{i}^{n_{0}}-z|$, and
\beq
y=z+\dfrac{x_{q}^{n_{0}}-z}{|x_{q}^{n_{0}}-z|}(\delta/2-2\eps),
\eeq
so that $y$ is a distance $\delta/2-2\eps$ away from $z$ in the direction of point $x_{q}^{n_{0}}$.  We know because $\delta \leq d(z,\partial E)$, that $B(y,\eps)\subseteq E$.  If $z_{n}\in B(y,\eps)$, consecutively, then the $n$th sound will always be categorized as a category in $A_{2}$ for the following reasons:
\begin{enumerate}
\item $x_q^{n}$ is closer to any point in $B(y,\eps)$ than $z$ is. So new exemplars are never classified in categories $i \in A_3$.

\item $x_{q}^{n}$ will always be closer to $B(y,\eps)$ than any $x_{i}^{n}$ such that $i\in A_{1}$. So new examplars are never classified in categories $i \in A_1$.
\end{enumerate}

Let constant $p>0$ be as determined by Lemma \ref{balllemma}, which will depend on $k$, $\eps$ and $\mbox{diam}(E)$.  If $z_{n}\in B(y,\eps)$ for $n_{0}\leq n\leq n_{0}+p$, we know $z_{n}\in S_{i}^{n}$ where $i\in A_{2}$ for all $n$ such that $n_{0}\leq n\leq n_{0}+p$.  By Lemma \ref{balllemma}, we know there exists a category $m$ such that $x_{m}^{n_{0}+p}\in B(y,2\eps)$.  The category $m$ must be in $A_{2}$ because the categories in $A_{1}\cup A_{3}$ do not move in the time interval.  This implies there exists an $i\in A_{2}$, such that $x_{i}^{n_{0}+p}\in B(y,2\eps)$, and because $|y-z|=\delta/2-2\eps$, we know that $|x_{i}^{n_{0}+p}-z|\geq \delta/2-4\eps$, giving the result.
\end{proof}

Using Lemma \ref{backgthm1} we will establish Theorem~\ref{awayexempthm} which states the following:  for a given $j$, as long as all exemplar means are away from the boundary of $E$, there is a probability bounded away from zero that at some time later,  all other exemplar means will be moved away from the $j$th one and the $j$th one will be moved away from the boundary.


\begin{thm} 
For any $\delta>0$, $j\in {1, \ldots,k}$, and time $n_0$,  there exists an $\eps>0$, $M>0$, and $H>0$ such that 
\beq
\mathbf{P}\brac{  \min_{i\neq j}|x_{i}^{n_0 + M}-x_{j}^{n_0 + M}| \geq \eps, d(x_j^{n_0 + M},\partial E) \geq \eps  \bigg|
d(x_j^{n_0}, \partial E) 
 \geq \delta }\geq H
\eeq
In particular, $\eps$, $M$, and $H$ depend only on $\delta$, $E$, and $k$.
\label{awayexempthm}
\end{thm}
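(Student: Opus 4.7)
The plan is to set $z = x_j^{n_0}$ and iteratively apply Lemma \ref{backgthm1} with center $z$ for $k-1$ rounds, pushing each non-$j$ exemplar mean that lies near $z$ out to a bounded-away distance, while $x_j$ stays fixed at $z$. Two observations make this work. First, since $x_i^0 \neq x_j^0$ for $i \neq j$, only one mean updates per step, and $z_n$ has a continuous density, an easy induction shows that almost surely $x_i^{n_0} \neq x_j^{n_0}$ for all $i \neq j$; hence the set $A_3$ of Lemma \ref{backgthm1} (taken with center $z$) equals $\{j\}$. Second, Lemma \ref{backgthm1} leaves $A_3$-exemplars unchanged, so $x_j$ does not move during the iterations, and $d(x_j^{n_0+M}, \partial E) \geq \delta$ is automatic.

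Choose $\eps = \delta/C^{k^2}$ with $C = C(k)$ large enough, set $M = (k-1) p$ where $p$ comes from Lemma \ref{balllemma}, and put $\hat\delta_0 = \delta$. At round $t$, writing $n_t = n_0 + (t-1)p$ and $d_i^{(t)} = |x_i^{n_t} - z|$ for $i \neq j$, we pick a radius $\hat\delta_t \leq \hat\delta_{t-1}/2 - 4\eps$ such that no $d_i^{(t)}$ lies in $[\hat\delta_t/2 - 4\eps, \hat\delta_t)$. A logarithmic-scale pigeonhole argument furnishes such a $\hat\delta_t$: each current distance $d$ blocks the $\hat\delta_t$-interval $(d, 2d+8\eps]$, whose intersection with the candidate range (bounded below by $8\eps$) has log-length at most $\log 3$, so the $k-1$ distances jointly block at most $(k-1)\log 3$ of log-length; choosing the candidate range $[L_t, \hat\delta_{t-1}/2 - 4\eps]$ with $L_t = (\hat\delta_{t-1}/2 - 4\eps)/3^{2k}$ gives range log-length at least $2k \log 3$, leaving an unblocked $\hat\delta_t$.

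Given a valid $\hat\delta_t$, the hypothesis $|A_1| + |A_2| + |A_3| = k$ of Lemma \ref{backgthm1} holds, and the lemma produces $y_t \in B(z, \hat\delta_t/2)$ with the property that if $z_n \in B(y_t, \eps)$ for every $n \in [n_t, n_t + p)$, then one $A_2(t)$-exemplar is pushed to distance at least $\hat\delta_t/2 - 4\eps$ from $z$, while all exemplars in $A_1(t) \cup A_3(t)$ (in particular $x_j$) remain unchanged. By Lemma \ref{nonempty_ball2} (with $F = E$), the probability $z_n \in B(y_t, \eps) \cap E$ at each step is bounded below by a constant $\alpha = \alpha(\eps, E) > 0$, so round $t$ succeeds with probability at least $\alpha^p$, and by chaining conditional probabilities the joint probability of all rounds succeeding is at least $\alpha^{(k-1)p}$. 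Because $\hat\delta_{t+1} \leq \hat\delta_t/2 - 4\eps$, an exemplar pushed in round $s$ has distance at least $\hat\delta_s/2 - 4\eps \geq \hat\delta_{t+1}$ at every later round $t$, so it stays in $A_1$ and is never pushed again; if $A_2(t)$ is empty at some round, the separation is already achieved and we fill the remaining rounds with analogous safe steps that only perturb distant exemplars.

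After $k-1$ rounds, every non-$j$ mean is either pushed (distance from $z$ at least $\hat\delta_s/2 - 4\eps$ for some $s \leq k-1$) or has remained in $A_1$ throughout (distance from $z$ at least $\hat\delta_1$); in either case its distance from $x_j^{n_0+M} = z$ is at least $\hat\delta_{k-1}/2 - 4\eps$, which exceeds $\eps$ by the choice of $C$. Combined with $d(z, \partial E) \geq \delta \geq \eps$, this yields the conclusion with $H \geq \alpha^{(k-1)p} > 0$; the quantities $\eps$, $M$, $H$ depend only on $\delta$, $E$, and $k$. The main obstacle is the combinatorial choice of $\hat\delta_t$ across rounds: a naive schedule like $\hat\delta_t = \hat\delta_{t-1}/2 - 4\eps$ can place a non-pushed $A_2$-exemplar's distance inside the next intermediate ring and violate Lemma \ref{backgthm1}'s hypothesis. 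The logarithmic-scale pigeonhole sidesteps this by exploiting that each forbidden interval has log-length at most $\log 3$, leaving room to choose $\hat\delta_t$ safely at every round.
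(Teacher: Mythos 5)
Your proposal is correct and follows the same skeleton as the paper's proof: set $z = x_j^{n_0}$, run $k-1$ rounds of $p$ consecutive arrivals each, use Lemma~\ref{backgthm1} to push one nearby mean out per round while $x_j$ stays fixed (being in $A_3$), and bound the probability of the whole event below by $Q^{(k-1)p}$ via Lemma~\ref{nonempty_ball2} with $F=E$. The one place you genuinely diverge is in how you guarantee the hypothesis of Lemma~\ref{backgthm1} --- that no mean lies in the ring between $\hat\delta_t/2-4\eps$ and $\hat\delta_t$ --- at each round. You choose the radii adaptively via a logarithmic-scale pigeonhole over the current distances; this works, at the cost of a much smaller $\eps$ (of order $\delta/C^{k^2}$ rather than the paper's $\delta 2^{-k}/9$). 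The paper instead fixes the radii $d_i = d_{i-1}/2 - 4\eps$ in advance and splits each round on $s_i^n$, the number of means already at distance $\geq d_i$ from $z$: when a push is actually needed ($s_i^n = i-1$), the inductive invariant that at least $i-1$ means are at distance $\geq d_{i-1}$ forces the ring $[d_i, d_{i-1})$ to be empty for free, so no pigeonhole is required; when $s_i^n \geq i$ it runs the same kind of ``safe'' filler round you describe. So the obstacle you flag is real, but the paper dissolves it with a counting argument rather than an adaptive schedule. Two side remarks: your observation that almost surely no other mean coincides with $x_j^{n_0}$ (so that $A_3=\{j\}$ and every non-$j$ mean can in fact be separated) is needed for the conclusion and is glossed over in the paper; and your handling of the filler rounds (``safe steps that only perturb distant exemplars'') is left at the same level of informality as the paper's Case~2, which likewise asserts without detailed justification that arrivals near the farthest mean cannot drag any mean back inside the relevant ball.
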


\begin{proof}
To prove this theorem, we will begin by showing there is an event which pulls all the exemplar means (except category $j$'s) away from category $j$'s exemplar mean $x_{j}^{n}$,
under the condition $d(x_{j}^{n_{0}},\partial E)\geq \delta$. Futhermore, $x_j^n$ will also be away from the boundary.
  We then will show the event has a positive probability of happening. 

Before describing the event, we will define some parameters.  We let $\eps =\delta 2^{-k}/9$.  
  Let $\{d_{i} \}_{i=1}^{k-1}$ be defined as $d_0=\delta$  and for all $i>1$, $d_{i+1}=d_{i}/2 -4\eps$.  One can calculate $d_{i}$ explicitly as
\begin{equation*}
d_{i} 
=\dfrac{\delta}{2^{i}}-8\eps\brac{1-\dfrac{1}{2^{i}}}
\end{equation*}
for $i\in\{1,\hdots,k-1 \}$.  This is a decreasing sequence, such that $B(x_{j}^{n_{0}},d_{i}+4\eps)\subseteq E$ for $i\geq 1$.
Our choice of $\eps$ guarantees that $d_{i}\geq \eps$ for all $i$.
%

An event will now be described which pulls every exemplar mean (except for category $j$'s) at least distance $\eps$ away from $x_{j}^{n}$.
The event will be described in an algorithmic manner, as a sequence of steps that must occur as the index $i$ runs from $1$ to $k-1$. For each step, what needs to happen depends on whether or not there are already $i$ exemplar means distance $d_i$ or greater from $x_j^n$. If there are not, we use Lemma~\ref{backgthm1} to pull one of the exemplars within distance $d_i$ from $x_j^n$ away from it. If there already are, we select new exemplars from a ball of radius $\eps$ far away from $x_j^n$ so as not to disturb exemplars close to $x_j^n$.
At the end of  step $i$, we will be guaranteed at least $i$ exemplar means will be distance $d_i$ or farther from $x_j^n$.

Let $z=x_{j}^{n_{0}}$ and $n=n_0$. We will start our process with $i=1$ and proceed through to $i=k-1$.


For $i = 1$ to $k-1$, 
let $s_i^n = | \{ q \mbox{ s.t. } |x_q^n-z| \geq d_i \}|$, the number of category means that are distance $d_i$ or greater from $z$. 
There are two possibilities. 

{\bf Case 1:} $s_i^n = i-1$.

We need to move at least one exemplar mean inside $B(z,d_i)$ outside of $B(z,d_i)$, without moving $x_j^n$ or any of the exemplar means that are outside of $B(z,d_{i-1})$.

By the inductive hypothesis, we know that there are  $i-1$ exemplars farther than $d_{i-1}$ from $z$. So this means that there are no exemplars in $B(z,d_{i-1})\backslash B(z,d_{i})$.

  We also know $z$ is at least distance $\delta$ from the boundary, $d_{i}=d_{i-1}/2-4\eps\geq \eps$, and $\delta>d_{i}$, which implies that we can implement Lemma \ref{backgthm1}.

By Lemma \ref{backgthm1}, there exists a $y\in B(z,d_{i}+4\eps)$, and a $p>0$, such that if $z_{m}\in B(y,\eps)\subseteq E$, for $n\leq m \leq n+p-1$, then $\max_{i\in A_{2}}|x_{i}^{n+p}-z|\geq d_{i}$, and $x_{i}^{n+p}=x_{i}^{n}$ for all exemplar means outside of $B(z,d_{i-1})$.  In other words, if new sounds are generated consecutively in $B(y,\eps)$, one exemplar mean will be moved outside of $B(z,d_i)$, but neither $x_j^n$ nor the exemplar means outside of $B(z,d_{i-1})$ will be moved.
We can take the $p$ determined by the lemma, which only depends on $k$, $\mbox{diam}(E)$, and $\eps$, and not on the specific configuration of category means.


{\bf Case 2:} $s_i^n \geq i$. There are already enough exemplars far enough away from $z$ at this step. 
Let $\ell$ be such that the exemplar $x_\ell^n$ is furthest from $z$. Let $y$ be distance $\delta-\eps$ away from $z$ in the direction of $x_\ell^n$. Allow the next $p$ exemplars introduced to the system to fall within $B(y,\eps)$, where $p$ is the same value that would have been chosen in Case 1. The effect of this will only be to move around exemplar means of distance farther than $d_i$ from $z$, and it cannot move them within $d_i$ of $z$.

Now observe that whichever of these cases occur, the exemplar mean $x_j^n$ is not moved. Since it started out at least distance $\delta$ from the boundary (and $\eps < \delta$), we complete this step with $x_j^n$ at least $\eps$ away from the boundary.

After one of these two cases has been performed for $i = 1$ to $k-1$,
the sequence of $p(k-1)$ events guarantees that all exemplar means except $x_j^n$ are distance $\eps$ away from $x_j^n$ at time $n= n_0 + p(k-1)$, and $x_j^n$ is also at least distance $\eps$ from the boundary. We just need to show that the probability of this event is bounded away from $0$.
By Lemma~\ref{nonempty_ball2}, letting $F=E$, each event ($z_n \in B(y,\eps)$) has a probability greater than $Q>0$ uniformly with respect to $y$. So the total event has probability at least $Q^{p(k-1)}$ as required.   Letting $H=Q^{p(k-1)}$ and $M=p(k-1)$ gives the result.
\end{proof}

The probability in Theorem \ref{awayexempthm} is conditioned on all the category means $x_j^{n_0}$ being a distance $\delta$ away from $\partial E$.
We have to establish that this event happens with a non-zero probability.  We will show there exists a sequence of events which bring all the exemplar means at least a distance $\delta^{*}$ away from $\partial E$.  The probability of the event occurring is bounded below like in Theorem~\ref{awayexempthm}.  Two lemmas are required to prove it.

\begin{lemma}
Let $z^{*}$ be in the interior of $E$, $\Gamma \in (0,1)$, and define the function $f(x)=z^{*}+\Gamma (x-z^{*})$.
There exists a $\delta>0$  such that $d(f(x),\partial E)\geq \delta$, for all $x\in E$.
\label{boundarylemma}
\end{lemma}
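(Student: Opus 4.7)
The plan is to exploit the fact that $f(x) = (1-\Gamma)z^* + \Gamma x$ is a convex combination of $z^*$ and $x$, together with the convexity of $E$, to show that a full ball of fixed radius around $f(x)$ sits inside $E$, uniformly in $x$.

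First, since $z^*$ lies in the interior of $E$, choose $r > 0$ with $B(z^*, r) \subseteq E$. I claim that $\delta := (1-\Gamma)r$ works, i.e., $B(f(x), (1-\Gamma)r) \subseteq E$ for every $x \in E$. To verify this, take an arbitrary $y \in B(f(x),(1-\Gamma)r)$ and write $y = f(x) + v$ with $|v| < (1-\Gamma)r$. Then
\begin{equation*}
y = (1-\Gamma)z^* + \Gamma x + v = (1-\Gamma)\left(z^* + \tfrac{v}{1-\Gamma}\right) + \Gamma\, x.
\end{equation*}
Since $|v/(1-\Gamma)| < r$, the point $z^* + v/(1-\Gamma)$ lies in $B(z^*,r) \subseteq E$, and $x \in E$ by hypothesis. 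Because $E$ is convex and $y$ is a convex combination of two points in $E$, we conclude $y \in E$. This shows $B(f(x),\delta) \subseteq E$, hence $d(f(x),\partial E) \geq \delta$ for every $x \in E$.

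This proof is quite short, so there is not really a hard step; the only conceptual move is recognizing that the contraction $f$ toward an interior point can be written as a convex combination and therefore enjoys a uniform ``inward margin'' of $(1-\Gamma)$ times the inradius at $z^*$. One small point to be careful about is whether we need strict versus non-strict inequalities in defining the ball and checking containment: using the open ball $B(z^*,r)$ and the bound $|v| < (1-\Gamma)r$ keeps everything consistent, and since $d(\cdot,\partial E)$ is bounded below by the radius of any open ball contained in $E$, the conclusion $d(f(x),\partial E) \geq \delta$ follows with the stated $\delta = (1-\Gamma)r > 0$.
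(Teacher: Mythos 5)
Your proof is correct. It uses the same key observation as the paper's first step --- that convexity of $E$ together with a ball $B(z^*,r)\subseteq E$ forces $B(f(x),(1-\Gamma)r)\subseteq E$ --- but you carry that observation to its quantitative conclusion, reading off the explicit uniform margin $\delta=(1-\Gamma)r$ directly. The paper instead uses this containment only to conclude that $f(x)$ is an interior point, and then obtains uniformity of $\delta$ by a separate compactness argument: $f(E)$ is compact, disjoint from the compact set $\partial E$, so the distance between them is attained and positive. Your route is more elementary and constructive (no appeal to compactness, and an explicit value of $\delta$ in terms of the inradius at $z^*$), and it would generalize to unbounded or non-closed convex sets with nonempty interior; the paper's route is less sharp but requires only that $f$ map $E$ into the interior, so it would survive if $f$ were replaced by a map for which the scaling computation is unavailable. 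Your handling of the open-ball versus distance-to-boundary issue is also fine: an open ball contained in $E$ cannot contain a point of $\partial E$, so $d(f(x),\partial E)\geq\delta$ follows.
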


\begin{proof}
We first observe that for all $x \in E$, $f(x)$ is in the interior of $E$. To see this, let $x$ be in the interior of $E$. Note that $B(z^*, \rho)$ must be contained in $E$ for some $\rho>0$ because $z^*$ is in the interior of $E$. By convexity, $B(z^* + \Gamma (x- z^*), (1-\Gamma) \rho)$ must also be contained in $E$. So $f(x)$ is in the interior of $E$.

Since $E$ is compact, $f$ is continuous, and continuous images of compact sets are compact, $f( E)$ is a compact set. Two compact sets must have respective points whose distance is the same as the distance between the sets. Since nowhere do $f(E)$ and $\partial E$ intersect, this distance must be positive. So, there exists a $\delta>0$ such that $\mathrm{dist} ( f( E), \partial E) \geq  \delta$.
\end{proof}


Before proving the next necessary lemma, we must define a sequence. Let $\Gamma = 1/2$.   For $\eps>0$, let $\{ f_{i} \}_{i=1}^{k}$ be a sequence such that $f_{1}=0$ and 
\beq
f_{i+1}=\dfrac{\Gamma+(f_{i}+2\eps)}{2}+2\eps,
\eeq
for all $i>1$.  
$f_{i}$ can be written explicitly as $f_{i}=(\Gamma + 6 \eps) (1-2^{1-i})$.   
We will need to choose  $\eps$  small enough so that $f_i<\Gamma$ for $i=1,\ldots, k$. This is accomplished by letting $\eps \leq 2^{-k}/6$. 
The sequence $\{ f_{i} \}_{i=1}^{k}$ will play a  similar role to the sequence $\{ d_{i} \}_{i=1}^{k-1}$ in Theorem \ref{awayexempthm}, though, $\{ f_{i} \}_{i=1}^{k}$ does not depend on the set $E$, ranges between $0$ and $1$, and increases instead of decreases.

\begin{lemma} 
Fix a time $n_0$.
Let $z^{*}$ be in the interior of  $E$ and let $\alpha >0$ be such that $B(z^*,\alpha) \subseteq E$.
Let $\{ c_{q} \}_{q=1}^{k}$ be a reordering of $\{ 1, \hdots , k \}$
 such that
\beq
|x_{c_{1}}^{n_{0}}-z^{*}|\leq |x_{c_{2}}^{n_{0}}-z^{*}|\leq \hdots \leq |x_{c_{k}}^{n_{0}}-z^{*}|.
\eeq
Let $q\in \{ 1, \hdots , k \}$, and let $s = \max\{ \alpha,  |x_{c_{q}}^{n_{0}}-z^{*}| \}$.
 Let $\Gamma = 1/2$ and let 
\[
\eps=\min \left\{  \dfrac{\delta}{ 3 \mathrm{diam}(E)}, \dfrac{2^{-k}}{6}  \right\}>0.
\]
Let  $f_i=(1-2^{1-i} )(\Gamma+6 \eps)$ for $i = 1,\ldots, k$, and 
  let $\delta>0$  be the constant determined by Lemma \ref{boundarylemma}, where we use $z^{*}$ as the interior point in $E$.

If $|x_{c_{q}}^{n_{0}}-z^{*}| > (f_{q}+2\eps)s$, and (if $q>1$) $ |x_{c_{q-1}}^{n_{0}}-z^{*}| \leq (f_{q-1}+2\eps)s$, 
then there exists a $y$ in the interior of $E$, and a $p>0$, such that if $z_{n}\in B(y,\eps \alpha)$, for $n_{0}\leq n < n_{0}+p$, there will exist an $i\geq q$ such that
\beq
|x_{c_{i}}^{n_{0}+p}-z^{*}|\leq (f_{q}+2\eps)s,\mbox{ and }
d(x_{c_{i}}^{n_{0}+p},\partial E)
\geq \delta/3.
\eeq
Additionally $x_{c_{r}}^{n}=x_{c_{r}}^{n_{0}}$, for all $r<q$, and $n$ such that $n_{0}\leq n \leq n_{0}+p$.
\label{backgthm2}
\end{lemma}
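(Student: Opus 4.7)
My plan is to mirror the structure of the proof of Lemma \ref{backgthm1}: choose a point $y$ in the interior of $E$, verify that as long as sounds fall in $B(y,\eps\alpha)$ only the ``far'' exemplar means (those $c_i$ with $i\ge q$) can be updated, and then invoke Lemma \ref{balllemma} (with ball radius $\eps\alpha$ in place of $\eps$) to pull one of those means into $B(y,2\eps\alpha)$. The new ingredient, compared with Lemma \ref{backgthm1}, is the boundary estimate $d(x_{c_i}^{n_0+p},\partial E)\ge\delta/3$; this I obtain by forcing $y$ into the contracted set $f(E)$ from Lemma \ref{boundarylemma}.

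Concretely I would take $y=z^*+f_q s\,(x_{c_q}^{n_0}-z^*)/|x_{c_q}^{n_0}-z^*|$, the point in the direction of the nearest far exemplar at distance $f_q s$ from $z^*$. Since $f_q\le\Gamma$ and $|x_{c_q}^{n_0}-z^*|>(f_q+2\eps)s\ge f_q s$, the point $y$ lies on the segment $[z^*,x_{c_q}^{n_0}]\subseteq E$, hence $y\in f(E)$ by convexity, and Lemma \ref{boundarylemma} gives $d(y,\partial E)\ge\delta$. Combined with $\eps\le\delta/(3\mathrm{diam}(E))$ and $\alpha\le\mathrm{diam}(E)$, any exemplar mean pulled within $2\eps\alpha$ of $y$ then lies within $f_q s+2\eps\alpha\le(f_q+2\eps)s$ of $z^*$ and at distance at least $\delta-2\eps\alpha\ge\delta/3$ from $\partial E$.

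The delicate step is verifying that $B(y,\eps\alpha)$ is disjoint from every close Voronoi cell. For $w\in B(y,\eps\alpha)$ and $r<q$, the triangle inequality together with the hypothesis $|x_{c_{q-1}}^{n_0}-z^*|\le(f_{q-1}+2\eps)s$ yields
\[
|w-x_{c_r}^{n_0}|\;\ge\;f_q s-(f_{q-1}+2\eps)s-\eps\alpha,
\qquad
|w-x_{c_q}^{n_0}|\;\le\;|x_{c_q}^{n_0}-z^*|-f_q s+\eps\alpha.
\]
The recursion $f_q=\Gamma/2+f_{q-1}/2+3\eps$ with $f_{q-1}<\Gamma$ (enforced by the choice $\eps\le 2^{-k}/6$) yields the crucial positive gap $f_q-f_{q-1}-2\eps=\eps+(\Gamma-f_{q-1})/2>0$, which is exactly what is needed to pin every $w\in B(y,\eps\alpha)$ into the Voronoi cell of some $c_i$ with $i\ge q$. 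Once that is done, the close means $x_{c_r}^n$ with $r<q$ are frozen throughout $[n_0,n_0+p]$, and the pigeonhole argument behind Lemma \ref{balllemma} delivers the desired $p$ (depending only on $k$, $\mathrm{diam}(E)$, $\eps$ and $\alpha$) together with the index $i$.

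The main obstacle I expect is this Voronoi-separation check in the regime where $|x_{c_q}^{n_0}-z^*|\gg\alpha$ so that $s=|x_{c_q}^{n_0}-z^*|$ and the closest far exemplar sits far beyond $y$. In that regime the entire separation must be extracted from the built-in geometric gap $f_q-f_{q-1}$ of the sequence $(f_i)$, and an appropriate truncation of $y$ toward $z^*$ may be required to keep $y\in f(E)$; careful bookkeeping of the constants in the recursion is what makes the two bounds above simultaneously usable.
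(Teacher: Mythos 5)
Your proposal reconstructs the paper's own proof almost step for step: the same choice of $y$ at distance $f_q s$ from $z^*$ in the direction of $x_{c_q}^{n_0}$, the same use of Lemma~\ref{boundarylemma} (writing $y=z^*+\Gamma(w-z^*)$ with $w\in E$) to get $d(y,\partial E)\geq\delta$, the same pair of triangle-inequality bounds, and the same invocation of Lemma~\ref{balllemma} with radius $\eps\alpha$. However, the one step you leave open --- that every point of $B(y,\eps\alpha)$ is closer to some $x_{c_i}^{n_0}$ with $i\geq q$ than to every $x_{c_r}^{n_0}$ with $r<q$ --- is a genuine gap, and the positive gap $f_q-f_{q-1}-2\eps>0$ is not ``exactly what is needed'' to close it. From your two displayed bounds the required comparison is
\[
s\left[1-f_q+\eps\right] \;<\; s\left[f_q-f_{q-1}-3\eps\right],
\]
and substituting the recursion $2f_q=\Gamma+f_{q-1}+6\eps$ with $\Gamma=1/2$ reduces this to $\eps>1/4$, which is incompatible with $\eps\leq 2^{-k}/6$. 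The failure is not an artifact of loose bounds: take $k=2$, $q=2$, $x_{c_1}^{n_0}=z^*$, and $x_{c_2}^{n_0}$ at distance $s\geq\alpha$ from $z^*$. Then $|y-x_{c_1}^{n_0}|=f_2 s=(1/4+3\eps)s$ while $|y-x_{c_2}^{n_0}|=(1-f_2)s=(3/4-3\eps)s$, so every sound landing in $B(y,\eps\alpha)$ is classified into the \emph{close} category $c_1$, and $x_{c_1}$ moves on the very first step, contradicting the freezing claim. (The paper's proof asserts the same inequality from the same bounds with a one-line justification, so you have faithfully reproduced its weakest point; but since you are presenting this as your proof, the burden of closing the step is yours, and as written it cannot be closed.)

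The underlying obstruction is the one you name at the end: when $|x_{c_q}^{n_0}-z^*|$ is of order $s$, the nearest far mean is order $s/2$ from any $y\in B(z^*,f_q s)$, while a close mean may be only order $\eps s$ away, so no single target ball inside $B(z^*,f_q s)$ can simultaneously attract a far category and repel the close ones. A repair must give up on a single application of Lemma~\ref{balllemma}: place the target ball near the \emph{current} position of $x_{c_q}$ (just inside its distance from $z^*$, on the segment toward $z^*$), pull the mean a bounded fraction of the way inward, and iterate a bounded number of stages until it enters $B(z^*,(f_q+2\eps)s)$, keeping track at each stage that the ball stays farther from all close means than from the mean being dragged and that $B(y,\eps\alpha)\subseteq E$. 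Your suggested remedy of ``truncating $y$ toward $z^*$'' moves in the wrong direction --- it makes the ball even closer to the frozen means --- but you have correctly located the regime in which the argument, both yours and the paper's, breaks down.
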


\begin{proof}
Let 
\[
y=z^{*}+ \dfrac{ (x_{c_q}^{n_0} - z^*)} { | x_{c_q}^{n_0} - z^* |} s f_q,
\]
so $y$ is distance $s f_{q}$ away from $z^{*}$ in the direction of point $x_{c_{q}}^{n_{0}}$.  
Note that if we let $w = z^* + \frac{s f_q}{\Gamma} \frac{(x_{c_q}^{n_0} - z^*)}{| x_{c_q}^{n_0} - z^* |}$, then 
\begin{equation}
y= z^* + \Gamma (w - z^*),
\end{equation}
where $w \in E$, since it is a convex combination of $z^*$ and $x_{c_q}^{n_0}$.
By Lemma \ref{boundarylemma}, we therefore have that $y$ is at least distance $\delta$ from $\partial E$.

We will show that exemplars $z_n$ falling in $B(y, \eps \alpha)$ will always be classified in categories $c_i$ for $i\geq q$. First note that this is immediate if $k=1$ so assume $k \geq 2$.  For $z_n \in B(y, \eps \alpha)$ we have
\begin{eqnarray*}
|z_n - x_{c_q}^{n_0}|  & \leq &  |y - x_{c_q}^{n_0}| + |y - z_n| \\
& \leq & s- f_q s + \eps \alpha \\
& \leq & s [1 - f_q + \eps],
\end{eqnarray*}
and, for $i < q$,
\begin{eqnarray*}
|z_n - x_{c_i}^{n_0}| & \geq & |y - z^*| - |z^* - x_{c_i}^{n_0}| - |z_n-y| \\
& \geq & f_q s - (f_{q-1} + 2 \eps) s - \eps \alpha \\
& \geq & s \left[ f_q - f_{q-1} - 3 \eps \right].
\end{eqnarray*}
The definition of $f_q$ in terms of $f_{q-1}$ then allows us to show $|z_n-x_{c_q}^{n_0}| < |z_n,x_{c_i}^{n_0}|$, and so exemplars falling in $B(y, \eps \alpha)$ will always be classified in categories $c_i$ for $i\geq q$.

Let $p$ be the constant determined by Lemma \ref{balllemma} which will depend on $\mbox{diam}(E)$ and $\eps \alpha$ (in place of $\eps$ in the lemma).  If $z_{n}\in B(y,\eps \alpha)$ for all $n$ such that $n_{0} \leq n < n_{0}+p$, there must exist an $m$ such that $x_{m}^{n_{0}+p}\in B(y,2\eps \alpha)$.  We know $m\geq q$, because the other exemplar means cannot move.  
As such there exists an $m\geq q$, such that $x_{c_{m}}^{n_{0}+p}\in B(y,2\eps s)$, since $s \geq \alpha$.  This gives 
\[
|x_{c_m}^{n_0+p}- z^*| \leq |x_{c_m}^{n_0+p}- y| + | y- z^*| \leq 2 \eps s + f_q s = (f_q + 2 \eps) s.
\]
Additionally $x_{c_{r}}^{n_{0}+p}=x_{c_{r}}^{n_{0}}$, for all $r<q$.

We know $\eps\leq \delta/ \brac{3\mbox{diam}(E)}$, which ensures that $\eps s\leq \delta/3$.   Since $y$ is at  least distance $\delta$ from $\partial E$ and the radius of the ball $B(y, 2\eps s)$ is at most $2 \delta/3$, we have that
$d(x_i^{n_0+p},\partial E)\geq \delta/3$.
\end{proof}

Lemma \ref{boundarylemma} will be used in the proof of the following theorem.  The proof of the theorem will be similar to the proof of Theorem \ref{awayexempthm}; we will describe an event in which all exemplar means are pulled away from the boundary, and which has a positive probability of occurring.

\begin{thm} 
There exists a $\delta^{*}>0$, an $M>0$, and an $H>0$ such that, for every time $n_0$
\beq
\mathbf{P}\brac{ \min_{i} d(x_{i}^{n_0 + M},\partial E ) \geq \delta^{*} }\geq H.
\eeq
$\delta^*$, $M$, and $H$ only depend on $E$.
\label{awayboundarythm}
\end{thm}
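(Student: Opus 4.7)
The approach will mirror that of Theorem~\ref{awayexempthm}, with roles reversed: rather than pushing exemplars away from a central point, I will iteratively pull each exemplar mean toward a fixed interior point $z^*$ of $E$, using Lemma~\ref{backgthm2} as the workhorse in place of Lemma~\ref{backgthm1}. Pick any $z^* \in \mathrm{int}(E)$ and any $\alpha \in (0, d(z^*,\partial E)/2)$, so that every point of $B(z^*,\alpha)$ lies at distance at least $\alpha$ from $\partial E$. Apply Lemma~\ref{boundarylemma} at $z^*$ with contraction factor $\Gamma = 1/2$ to obtain $\delta > 0$, and let $\eps$ and the sequence $\{f_i\}_{i=1}^k$ be defined as in the preamble to Lemma~\ref{backgthm2}. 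The plan is to prove the theorem with $\delta^{*} := \min\{\alpha,\delta/3\}$.

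I will build the desired event in $k$ stages, indexed $q = 1,\ldots,k$. Before each stage I reorder the categories as $c_1,\ldots,c_k$ by nondecreasing distance to $z^*$, and carry the inductive invariant that at the end of stage $q$ at least $q$ exemplar means are at distance $\geq \delta^{*}$ from $\partial E$. At stage $q$ I check the hypotheses of Lemma~\ref{backgthm2}. If they hold, the lemma supplies a point $y_q$, a radius $\eps\alpha$, and a number of steps $p$ (depending only on $E$ and $k$) such that forcing $z_n \in B(y_q,\eps\alpha)$ for those $p$ steps moves some $x_{c_m}$ with $m\geq q$ to distance $\geq \delta/3$ from $\partial E$ while keeping every $x_{c_r}$ with $r < q$ fixed. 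If instead the ``outside'' half of the hypothesis fails, i.e.\ $|x_{c_q}-z^{*}| \leq (f_q+2\eps)s$, then combining $f_q+2\eps<1$ with $s=\max\{\alpha,|x_{c_q}-z^{*}|\}$ forces $|x_{c_q}-z^{*}|\leq\alpha$, so the $q$th closest (and all closer) exemplars already lie in $B(z^{*},\alpha)$ at distance $\geq\alpha\geq \delta^{*}$ from $\partial E$; the stage is vacuous and the invariant holds for free.

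After $k$ stages, every exemplar mean is at distance at least $\delta^{*}$ from $\partial E$. The total elapsed time $M$ is at most $kp$, and each required event $\{z_n \in B(y_q,\eps\alpha)\}$ has probability at least some uniform $Q>0$ by Lemma~\ref{nonempty_ball2} with $F=E$, so the conjunction occurs with probability at least $H := Q^M$. All of $\delta^{*}$, $M$, $H$ depend only on $E$, as required. The main obstacle I expect is not the setup but the bookkeeping that ensures both halves of the hypothesis of Lemma~\ref{backgthm2} are maintained at every non-trivial stage: besides the ``outside'' bound for the $q$th closest, the lemma also requires the ``inside'' bound $|x_{c_{q-1}}-z^{*}| \leq (f_{q-1}+2\eps)s$, and $s$ can shrink when the ordering changes. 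Preserving this across the successive reorderings will likely require strengthening the invariant to place the first $q-1$ category means explicitly inside the nested balls $B(z^{*},(f_i+2\eps)\,\mathrm{diam}(E))$ for $i=1,\ldots,q-1$, or else inserting intermediate applications of Lemma~\ref{balllemma} to tighten already-moved exemplars further toward $z^{*}$ whenever needed.
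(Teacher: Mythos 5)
Your proposal is essentially the paper's own proof: the same choice of an interior point $z^*$, the same constants $\delta$, $\eps$, $\{f_i\}$ from Lemma~\ref{boundarylemma} and the preamble to Lemma~\ref{backgthm2}, the same $k$-stage induction with reordering by distance to $z^*$ driven by Lemma~\ref{backgthm2}, and the same probability bound $H = Q^{kp}$ via Lemma~\ref{nonempty_ball2}. Two small points of difference: the paper does not let the ``already satisfied'' stage be vacuous but instead spends the same $p$ steps on the harmless event $z_n \in B(z^*,\eps\alpha)$, which is needed because the theorem asserts a \emph{fixed} $M$, whereas your ``at most $kp$'' leaves $M$ random; and the bookkeeping worry you flag about maintaining the inside hypothesis $|x_{c_{q-1}}-z^*| \leq (f_{q-1}+2\eps)s$ across reorderings is real, is resolved exactly by the strengthened nested-ball invariant you propose (the paper carries the count $s_i^n$ of means within $(f_i+2\eps)s$ of $z^*$ rather than a distance-to-boundary invariant), and is treated no more carefully in the paper than in your sketch.
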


\begin{proof}
To prove this lemma, we are going to show there is an event which can bring all the exemplar means away from $\partial E$.  We will prove this event has a positive probability of happening.  Before describing the event, we will define some variables.

Let $z^{*}$ be a point in the interior of $E$.  Let  $\alpha=d(z^{*},\partial E)>0$. 

We use the sequence $\{ f_{i} \}_{i=1}^{k}$, which was defined before Lemma \ref{backgthm2}.
To repeat: let $\Gamma=1/2$, and use Lemma~\ref{boundarylemma} to give us a $\delta>0$ so that $d(z^* + \Gamma(x-z^*),\partial E) \geq \delta$ for all $x \in E$.  
Let  $\eps = \min \left\{ \delta/(3 \mbox{diam}(E)), 2^{-k}/6 \right\}$.
Then let
 $f_{i}=(1-2^{1-i})(\Gamma+6\eps)$ for $i=1,\ldots, k$.


As in Theorem~\ref{awayexempthm} we will describe a sequence of steps that pulls the $i$th exemplar to within distance $(f_i + 2\eps) \mbox{diam}(E)$ of $z^*$ and where all the exemplars are at least distance $\delta^*  = \delta/3$ away from $\partial E$. We start with $i = 1$ and then increase $i$ up to $k$.

Let $n=n_0$.
We repeat the following steps for $i =1, \ldots, k$.
First, we let the indices $c_\ell, \ell=1,\ldots,k$ be such that 
\[
| x_{c_1}^{n} - z^*| \leq | x_{c_2}^{n} - z^* | \leq \cdots \leq | x_{c_k}^{n} - z^* |
\]
Let  $s = \max\{ \alpha, |x_{c_i}^n - z^*| \}$.
Let $s_i^n = | \{ q \mbox{ s.t.} | x_q^n - z^* | < (f_i + 2 \eps)s \} | $ be the number of category means that  are within $(f_i + 2 \eps) s$ of $z^*$. 
There are two possibilities.

{\bf Case 1:} $s_i^n = i-1$.

We need to move at least one exemplar outside of $B(z^*, (f_i + 2 \eps)s)$ to within distance  $(f_i + 2 \eps)s$ of $z^*$. Lemma~\ref{backgthm2} shows precisely this: there is a $p$ such that if $z_m$ falls in $B(y,\eps \alpha)$ for $n \leq m< n + p$ then at least one exemplar mean will be moved in to $B(z^*, (f_i + 2 \eps)s)$. Furthermore, none of the exemplar means that are already closer to $z^*$ will be moved, and the exemplar mean that is moved will be farther than distance $\delta/3$ of $\partial E$.

{\bf Case 2:} $s_i^n \geq i$.

In this case, none of the exemplar means need to be moved for the condition to be satisfied. In this case we allow $p$ exemplars in a row to fall within $B(z^*,\eps \alpha)$.  Since $\eps \alpha \leq (f_i + 2 \eps) s$ for  all $i$, we have $B(z^*,\eps \alpha) \subset B( z^*, (f_i + 2 \eps) s)$  for all $i$, and this does not change any of the necessary containments. 

Finally, we update $n$ to $n+ p$.

Repeating the procedure for $i =1,\ldots, k$ gives the required conditions. What was necessary was that event of the form $z_n \in B(y,\eps \alpha)$ for some $y\in E$,  $k p$ times in a row. Lemma~\ref{nonempty_ball2} shows that the probability of $z_n \in B(y,\eps \alpha)$ is bounded below uniformly in $y \in E$ by some $h >0$, and so the probability of it happening $kp$ times in a row for varying $y$ is bounded below by $H = h^M$, where $M=kp$.
\end{proof}


 With these theorems, we now have enough to prove there is no categorical collapse for the system described in Section~\ref{secdef}.  

We introduce two new definitions. First, let $\mathcal{F}_n$ for $n \geq 0$ be the $\sigma$-algebra generated by $\{ x_j^m, w_j^m, z_{m-1} \}_{m \leq n}$, so that $\mathcal{F}_n$ represents everything that has happened up to and including time $n$. Let $C_\eps$ be the set of all  $x \in E^k$ such that all $x_j$ are at least distance $\eps$ away from each other and from the boundary of $E$:
\[
C_\eps = \left\{ x \in E^k \bigg| \min_{i \not = j} |x_i - x_j| > \eps, \min_i d(x_i, \partial E) > \eps \right\}.
\]
We now establish that there is an $\eps>0$ such that at any time $n$, conditioning on the present state there is a probability bounded away from zero that the process will enter $C_\eps$ after a fixed number of steps.

\begin{thm}
There exists an $\eps>0$, an $H>0$, and an integer $M>0$ such that, for each category $j$ and each time $n$,
\[
\mathbf{P}( x^{n+M} \in C_\eps | \mathcal{F}_n ) > H.
\]
Consequently, $\mathbf{P}(x^n \in C_\eps ) > H$ for all $n \geq M$.
\label{areathm}
\end{thm}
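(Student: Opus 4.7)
The plan is to combine Theorem \ref{awayboundarythm} with iterated applications of Theorem \ref{awayexempthm}. First, I would apply Theorem \ref{awayboundarythm} with time origin $n$ to obtain constants $\delta^{*}, M_1, H_1 > 0$ such that, conditional on $\mathcal{F}_n$, with probability at least $H_1$ we have $d(x_i^{n+M_1}, \partial E) \geq \delta^{*}$ for every $i$. This gives all exemplar means a uniform buffer from the boundary before I begin working on pairwise separation.

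On this event, I would then apply Theorem \ref{awayexempthm} sequentially for $j = 1, 2, \ldots, k$. At iteration $j$, I invoke Theorem \ref{awayexempthm} with an input parameter $\delta_j > 0$ chosen small enough that $d(x_j, \partial E) \geq \delta_j$ holds at the start of that iteration; the theorem then yields constants $\eps_j, M_j', H_j' > 0$ so that, with probability at least $H_j'$, after $M_j'$ further steps $x_j$ becomes $\eps_j$-separated from all other exemplar means and from $\partial E$. Since the constants in both theorems are independent of the time origin, the Markov property of the update makes the joint probability of all these successive events at least $H_1 \prod_{j=1}^{k} H_j' > 0$, after a total of $M = M_1 + \sum_{j=1}^{k} M_j'$ steps.

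The main obstacle is ensuring that each iteration preserves the separations established in earlier iterations. A close reading of the proof of Theorem \ref{awayexempthm} shows that during iteration $j$ the only exemplar means that move are those lying within distance roughly $\delta_j/2$ of $x_j$, and $x_j$ itself stays fixed; everything farther than $\delta_j/2$ from $x_j$ is untouched. Consequently, if $\{\delta_j\}$ is chosen to decrease rapidly enough relative to $\{\eps_j\}$ (a geometric decay with ratio depending on $k$ is sufficient), then each pairwise separation $|x_{j'} - x_i| \geq \eps_{j'}$ established at iteration $j' < j$, as well as each distance $d(x_{j'}, \partial E) \geq \eps_{j'}$, is degraded by at most a small cumulative amount across the remaining iterations and survives with a definite uniform lower bound $\eps > 0$.

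Taking this $\eps$, together with $M$ and $H = H_1 \prod_{j=1}^{k} H_j'$ (all independent of $n$), yields the desired conditional bound $\mathbf{P}(x^{n+M} \in C_\eps \mid \mathcal{F}_n) > H$; the unconditional bound $\mathbf{P}(x^n \in C_\eps) > H$ for $n \geq M$ then follows by integrating against the distribution of $\mathcal{F}_{n-M}$.
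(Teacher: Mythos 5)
Your overall architecture is the same as the paper's: first invoke Theorem~\ref{awayboundarythm} to put every category mean at distance $\delta^{*}$ from $\partial E$ with probability at least $H_1$ after $M_1$ steps, then condition on that event, apply Theorem~\ref{awayexempthm}, and multiply the probabilities via the tower property (legitimate because the constants in both theorems are uniform in the time origin); the unconditional bound for $n\geq M$ then follows by integrating over $\mathcal{F}_{n-M}$, exactly as you say. The one structural difference is that the paper applies Theorem~\ref{awayexempthm} exactly \emph{once}, for the single category $j$ appearing in the statement, obtaining $\min_{i\neq j}|x_i^{n+M}-x_j^{n+M}|\geq \eps_1$ together with the boundary clearance, and then sets $\eps=\eps_1/2$, $M=M_1+M_2$, $H=H_1H_2$. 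That $j$-centred conclusion is all that is used downstream (Lemma~\ref{lem:handy} and Corollary~\ref{cor:final} only need the $j$th mean separated from the others and from $\partial E$), so the first pass of your loop already reproduces the paper's argument.

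Your additional loop over $j=1,\ldots,k$ is aimed at the literal definition of $C_\eps$, which demands \emph{all} pairwise separations, and there the justification has a gap. You assert that during iteration $j$ only the means within roughly $\delta_j/2$ of $x_j$ move and everything farther is untouched. Theorem~\ref{awayexempthm} guarantees no such thing, and its proof actually violates it: in the case where enough means are already far from $x_j$, the construction deliberately sends $p$ consecutive stimuli into a ball near the mean \emph{farthest} from $x_j$, which drags the distant means around; the only control is that they do not re-enter the forbidden ball about $x_j$. Nothing in the theorem's conclusion or its proof constrains the pairwise distances among the means with indices other than $j$, so a later iteration can in principle collapse a separation $|x_{j'}-x_i|\geq\eps_{j'}$ established earlier. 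Making the $\delta_j$ decay geometrically does not repair this, because the uncontrolled motion occurs far from $x_j$, not near it. To complete your stronger version you would need to strengthen Theorem~\ref{awayexempthm} so that its event also fixes (or at least preserves the mutual separation of) the means already dealt with --- essentially redoing its inductive bookkeeping at the level of all pairs --- or else settle, as the paper implicitly does, for the single-$j$ conclusion, which suffices for every subsequent use of Theorem~\ref{areathm}.
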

\begin{proof}
Let time $n \geq 0$
 be given.
By Theorem \ref{awayboundarythm}, we know there exists a $\delta^{*}>0$, $M_{1}>0$, and $H_{1}>0$, such that
\beq
\mathbf{P}\brac{ \min_i d(x_{i}^{n+M_1}, \partial E) \geq \delta^{*}  |  \mathcal{F}_n}\geq H_{1}.
\eeq
By Theorem \ref{awayexempthm} there exists an $\eps_{1}>0$, $M_{2}>0$, and an $H_{2}>0$, such that
\beq
\mathbf{P}\brac{  \min_{i\neq j}|x_{i}^{n+M_1 + M_2}-x_{j}^{n + M_1 + M_2}| \geq \eps_{1}, \min_i d (x_{i}^{n + M_1 + M_2},\partial E) \geq \eps_1 \bigg| d(x_{j}^{n+ M_{1}},\partial E) \geq \delta^{*} , \mathcal{F}_{n+M_1}}\geq H_{2}.
\eeq
Combining these two bounds we get that 
\[
\mathbf{P}\brac{  \min_{i\neq j}|x_{i}^{n+M_1 + M_2}-x_{j}^{n + M_1 + M_2}| \geq \eps_{1}, d(x_{j}^{n + M_1 + M_2},\partial E )   \geq \eps_1 \bigg| \mathcal{F}_n}\geq H_1 H_{2}.
\]
Letting $\eps= \eps_1/2$, $H=H_1 H_2$ and $M=M_1+M_2$ gives the result.
%
\end{proof}

Next we establish the consequences of a vector of exemplar means $x^n$ being in $C_\eps$: the volume of each Voronoi cell is bounded away from $0$ and each exemplar mean has a considerable chance of moving far.

\begin{lemma}
\label{lem:handy}
If $x^n \in C_\eps$ then  for all $j$
\begin{enumerate}
\item the volume of $S^n_j$ is greater than or equal to that of a sphere of radius $\eps/2$ in $\R^N$,
\item for some $H'>0$, $\mathbf{P}( |x^{n+1}_j - x_j^n | > \eps/4 \gamma | \mathcal{F}_n) > H'$,
\end{enumerate}
where $\gamma$ is defined in Lemma \ref{lem:weightbound} and $H'$ only depends on $E, f$ and $\eps$.
\end{lemma}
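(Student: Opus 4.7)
My plan is to prove the two parts almost independently, with part~2 reusing the containment established in part~1.

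For part~1, I will show the stronger statement $B(x_j^n, \eps/2) \subseteq S_j^n$. Suppose $y \in B(x_j^n,\eps/2)$. Because $x^n \in C_\eps$, the point $x_j^n$ lies at distance greater than $\eps$ from $\partial E$, so $|y - x_j^n| < \eps/2 < \eps$ guarantees $y \in E$. For any $i \neq j$, the triangle inequality and the separation condition $|x_i^n - x_j^n| > \eps$ give
\[
|y - x_i^n| \;\geq\; |x_i^n - x_j^n| - |y - x_j^n| \;>\; \eps - \eps/2 \;=\; \eps/2 \;>\; |y - x_j^n|,
\]
so $y$ is strictly closer to $x_j^n$ than to any other $x_i^n$, hence $y \in S_j^n$. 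The volume of $S_j^n$ is therefore at least that of a ball of radius $\eps/2$ in $\R^N$.

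For part~2, the idea is to construct a small ball, contained in $S_j^n \cap E$, whose points are all at distance strictly greater than $\eps/4$ from $x_j^n$, and then lower bound the probability that $z_n$ falls inside it using Lemma~\ref{nonempty_ball2}. Fix any unit vector $u \in \R^N$ and let $y = x_j^n + (3\eps/8)\, u$. For any $z \in B(y,\eps/16)$, we have
\[
|z - x_j^n| \;\leq\; 3\eps/8 + \eps/16 \;=\; 7\eps/16 \;<\; \eps/2
\]
and $|z - x_j^n| \geq 3\eps/8 - \eps/16 = 5\eps/16 > \eps/4$. Combined with part~1, the first inequality shows $B(y,\eps/16) \subseteq B(x_j^n,\eps/2) \subseteq S_j^n \cap E$. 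Applying Lemma~\ref{nonempty_ball2} with $F = E$ and $r = \eps/16$ yields a constant $H' > 0$, depending only on $E$, $f$, and $\eps$, such that $P(B(y,\eps/16)) \geq H'$ uniformly in the choice of $y \in E$.

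Finally, conditioned on $\mathcal{F}_n$ and on the event $z_n \in B(y,\eps/16)$, since $z_n \in S_j^n$ the update rule \eqref{eqn:exempincat} gives
\[
|x_j^{n+1} - x_j^n| \;=\; \frac{|z_n - x_j^n|}{w_j^n e^{-\lambda} + 1} \;>\; \frac{\eps/4}{w_j^n e^{-\lambda} + 1}.
\]
The denominator equals $w_j^{n+1}$, which by Lemma~\ref{lem:weightbound} is at most $\gamma$; hence $|x_j^{n+1} - x_j^n| > \eps/(4\gamma)$, and the conditional probability of this displacement event is at least $H'$, as required. The only minor obstacle is arranging the numerical constants so the strict inequality $|z_n - x_j^n| > \eps/4$ is preserved through the update and yields the stated bound $\eps/(4\gamma)$; the choice of radii $3\eps/8$ and $\eps/16$ above is made with this in mind.
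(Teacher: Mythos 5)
Your proposal is correct and follows essentially the same route as the paper: part 1 from the definition of $C_\eps$ via $B(x_j^n,\eps/2)\subseteq S_j^n$, and part 2 by locating a small ball inside the annulus $B(x_j^n,\eps/2)\setminus B(x_j^n,\eps/4)$, invoking Lemma~\ref{nonempty_ball2} for a uniform lower bound on its probability, and then using the update formula $|x_j^{n+1}-x_j^n| = |z_n - x_j^n|/w_j^{n+1}$ together with the weight bound $\gamma$. Your explicit choice of radii even tidies up the strict-versus-nonstrict inequality that the paper glosses over.
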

\begin{proof}
The first result follows from the definitions of $C_\eps$ and $S^n_j$. 

For the second result, fix a category $j$ and
let $F=B(x_{j}^{n}, \eps/2) \backslash B(x_{j}^{n}, \eps /4)$.  Because $x^n \in C_\eps$, $F$ lies entirely in $E$, and furthermore lies entirely in $S_j^n$.
We know there always exists a ball of radius $\eps /8$ which is a subset of $F$.  This implies by Lemma \ref{nonempty_ball2} that the probability of the event $\{ z_{n}\in F \}$ is bounded below by some constant $H'>0$.

If $z_{n}\in F$, then $|z_{n}-x_{j}^{n}| \geq \eps /4$, implying
\begin{align*}
|x_{j}^{n+1}-x_{j}^{n}| =\dfrac{|z_{n}-x_{j}^{n}| }{w_{j}^{n+1}} > \dfrac{\eps}{4 \gamma},
\end{align*}
as required.
\end{proof}


Putting together the bound on probability of being in $C_\eps$ in Theorem~\ref{areathm} with the consequences of being in $C_\eps$ from being in Lemma~\ref{lem:handy} gives us the following, which is also the Result 1 of Theorem~\ref{thm:bigBeginningSection}.

\begin{cor} \label{cor:final}
For all $j \in \{1,\ldots,k\}$ the exemplar mean $x_j^n$ does not converge in probability as $n \rightarrow \infty$.
\end{cor}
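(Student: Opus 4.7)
The plan is to derive a contradiction from the assumption that $x_j^n \to x_j$ in probability for some random variable $x_j$. The key mechanism is that the uniform lower bound provided by Theorem \ref{areathm} and Lemma \ref{lem:handy} forces a nontrivial chance of a macroscopic one-step jump at every sufficiently late time, whereas convergence in probability would force such jumps to become rare. All the serious work has already been done in the preceding results, so the corollary is essentially a bookkeeping argument combining them.

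First I would observe that convergence in probability of $x_j^n$ to $x_j$ implies that the successive differences vanish in probability. For any $\delta>0$ the triangle inequality gives
\[
\mathbf{P}(|x_j^{n+1} - x_j^n| > \delta) \leq \mathbf{P}(|x_j^{n+1} - x_j| > \delta/2) + \mathbf{P}(|x_j^n - x_j| > \delta/2),
\]
and both terms on the right tend to $0$ by hypothesis, so $|x_j^{n+1} - x_j^n| \to 0$ in probability.

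Next I would bound this same quantity from below by a positive constant independent of $n$, using $\delta := \eps/(4\gamma)$, where $\eps$ comes from Theorem \ref{areathm} and $\gamma$ from Lemma \ref{lem:weightbound}. Since $\{x^n \in C_\eps\}$ is $\mathcal{F}_n$-measurable, I can pull the indicator outside the inner conditional probability and apply the second part of Lemma \ref{lem:handy}:
\[
\mathbf{P}\!\left(|x_j^{n+1} - x_j^n| > \tfrac{\eps}{4\gamma}\right) \geq \mathbf{E}\!\left[\mathbf{1}_{\{x^n \in C_\eps\}}\, \mathbf{P}\!\left(|x_j^{n+1} - x_j^n| > \tfrac{\eps}{4\gamma} \,\Big|\, \mathcal{F}_n\right)\right] \geq H' \, \mathbf{P}(x^n \in C_\eps) \geq H' H
\]
for every $n \geq M$, where $H$ and $M$ are from Theorem \ref{areathm} and $H'$ is from Lemma \ref{lem:handy}. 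Since $H H' > 0$ is independent of $n$, this directly contradicts the previous paragraph and completes the argument.

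The main obstacle is not conceptual but notational: one must be slightly careful with the conditional-probability manipulation, checking that $\{x^n \in C_\eps\}$ indeed lies in $\mathcal{F}_n$. This is immediate from the definition of $\mathcal{F}_n$ as the $\sigma$-algebra generated by $\{x_j^m, w_j^m, z_{m-1}\}_{m \leq n}$, so the argument goes through as sketched.
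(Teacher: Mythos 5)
Your proposal is correct and follows essentially the same route as the paper: assume convergence in probability, deduce via the triangle inequality that one-step increments vanish in probability, and contradict this with the uniform lower bound $H H'$ obtained by combining Theorem~\ref{areathm} with Lemma~\ref{lem:handy}. Your explicit tower-property step making rigorous the passage from the conditional bound to the unconditional one is a detail the paper leaves implicit, but the argument is the same.
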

\begin{proof}
Suppose for some random variable $x_j$ taking values in $(\R^N)^k$, $x_j^n$ converges in probability to $x_j$.
So for all $\delta>0$, $\mathbf{P} (  |x_j - x_j^{n}|> \delta ) \rightarrow 0$ as $n \rightarrow \infty$. 

Let $\eps$, $H$ and $M$ be given as in Theorem~\ref{areathm}. Then using Lemma~\ref{lem:handy}, we have $n \geq M$ implies $\mathbf{P}( |x_j^{n+1} - x_j^n| > \eps/4 \gamma) > H H'$.
But
\[
\mathbf{P}( |x_j^{n+1} - x_j^{n}|> \eps/4 \gamma) \leq \mathbf{P}( |x_j^{n+1} - x_j|> \eps/8 \gamma)+ \mathbf{P}( |x_j - x_j^{n}|> \eps/8 \gamma).
\]
So the quantities on the right cannot both converge to $0$, which contradicts our assumption of convergence in probability.
\end{proof}

%
%
%
%
%
%
%
%
%

\section{A Simple Model for the Motion of the Perceptual Boundary}
\label{sec:twocbeh}

In the remainder of the paper will  study a simple special case of our model.  We consider a system with just two categories. We let our domain be  $E=[0,1]$ and  we stipulate that new exemplars arrive in the system with uniform probability density on $E$.  These choices correspond to $k=2$ and $f(x)=1$ for all $x \in E$.
Figure \ref{fig:1dex} shows a state of our model for these choices.
We perform a detailed study of the dynamics of the perceptual boundary in this case, providing a simple probabilistic model of its motion.

\begin{figure}[h!]
\begin{center}
	\includegraphics[width=13cm]{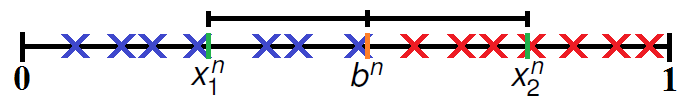}
	\end{center}
	\caption{A representative state of our model in the special case of $E=[0,1]$, uniform probability distribution on $[0,1]$ and two categories (blue and red). $x_1^n$ and $x_2^n$ are the corresponding category means, and $b^n$ is the perceptual boundary between the two categories.    }
	\label{fig:1dex}
\end{figure}

Our goal in this section is to characterize the behaviour of $b^n$, the location of the perceptual boundary as function of $n$.  The sequence $b^n, n\geq 0$ is a discrete-time stochastic process, where the randomness in the evolution of $b^n$ enters through the i.i.d.\ random variables $z_n, n\geq 0$. Even in this simplest case of our model, we are not able to get a complete analysis of the system, so we instead derive an even simpler approximation to our model.
We derive an autoregressive first-order ($AR(1)$) model as an approximation to the behaviour of $b^n$ \cite{madsen2007}. We find the best approximation to the dynamics of $b^n$ among $AR(1)$ models, deriving coefficients in terms of $\lambda$. Recall that an $AR(1)$ model for a real time series  $y_n \in \R$, $n=0,1,2,\ldots$ is given by 
\begin{equation}
y_{n+1} = k y_n  + \sigma \eta_n
\end{equation}
where $k \in \R$, $\eta_n$ is a sequence of i.i.d.\ standard Gaussian random variables and $\sigma \geq 0$ is a scalar.  When $|k|<1$ the sequence $y_n$ converges to a stationary stochastic process with stationary distribution $N(0,\sigma^2/(1-k^2))$.  $y_n$ fluctuates about $y=0$ with a limiting stationary autocovariance function given by \cite[p. 147]{madsen2007}
\begin{equation} \label{eqn:autocov}
C_{r} = \lim_{n \rightarrow \infty} \mathbb{E}  [ y_n y_{n+r} ] = \sigma^2 k^{|r|} /(1- k^2).
\end{equation}

We assume the initial exemplar means are ordered such that $x_{1}^{0}<x_{2}^{0}$, implying $x_{1}^{n}<x_{2}^{n}$, for all $ n\geq 0$.  Let the perceptual boundary between the $2$ exemplar means be $b^{n}$.  It is straightforward to show $b^{n}=(x_{1}^{n}+x_{2}^{n})/2$.


To motivate the use of an $AR(1)$ model for  the time series of $n$, see 
Figure \ref{fig:example} in which we show time series for $x^n_1, x^n_2$ and $b^n$.  The left graph shows the evolution of the system where $\lambda>0$, and the right where $\lambda=0$ (MacQueen's model \cite{macqueen}).  The red lines represent the two weighted exemplar means $x_{1}^{n}$ and $x_{2}^{n}$.  The blue line is the perceptual boundary given by $b^{n}=(x_{1}^{n}+x_{2}^{n})/2$, which represents the boundary between the Voronoi cells ($S_{1}^{n}$ and $S_{2}^{n}$) of the two categories.  
In the left plot ($\lambda>0$) the category means fluctuate with roughly the same amplitude for the whole interval, whereas on the right they appear to converge.


\begin{figure}[h!]
	\includegraphics[width=13cm]{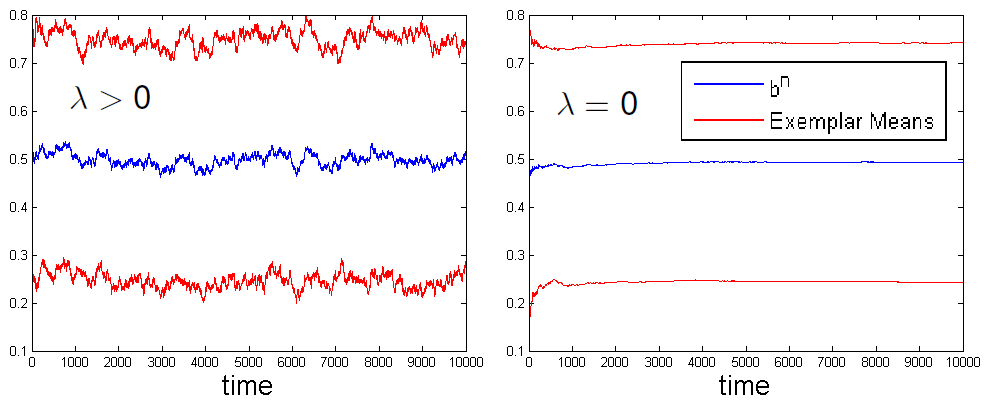}
	\caption{A comparison of the evolution of simulations of the two-category uniform distribution system $1$-D case.  The left graph exhibits what the system looks like when $\lambda=0.01$ and initial values $w_{1}^{0}=w_{2}^{0}=W/2 \cong 50$.  On the right we set $\lambda=0$ and $w_{1}^{0}=w_{2}^{0} = 10$.}  
	\label{fig:example}
\end{figure}

Recall from Section~\ref{secdef} that the evolution of the category means of the system can be completely expressed using only the category means and the category weights: the positions of individual stored exemplars do not enter the dynamics.
Accordingly, we begin deriving our approximate model by defining the sequence of random vectors
\beq
\mathbf{Z}^{n}=\begin{bmatrix}
x_{1}^{n}, &
x_{2}^{n}, &
w_{1}^{n}, &
w_{2}^{n}
\end{bmatrix}^{T},
\eeq
for $n\geq 0$.
We can exactly express the evolution of $\mathbf{Z}^n$ as a random dynamical system
\[
\mathbf{Z}^{n+1}=\Phi(\mathbf{Z}^{n},z_{n})
\]
where we see that each $\mathbf{Z}^{n+1}$ is determined as a function of the previous value $\mathbf{Z}^n$ and the random variable $z_n$. Hence the randomness of $\mathbf{Z}^{n+1}$ only enters through $z_n$ once $\mathbf{Z}^n$ is known. The expression for $\Phi$ is
\beq
\Phi(\mathbf{Z}^{n},z_{n})=\begin{cases}
\begin{bmatrix}
\brac{\dfrac{x_{1}^{n} w_{1}^{n} e^{-\lambda }+z_{n}}{w_{1}^{n}e^{-\lambda }+1}  }, &
x_{2}^{n}, &
\brac{w_{1}^{n}e^{-\lambda }+1}, &
w_{2}^{n}e^{-\lambda }
\end{bmatrix}^{T} & \mbox{if }z_{n}\leq \dfrac{x_{1}^{n}+x_{2}^{n}}{2} \\ \\

\begin{bmatrix}
x_{1}^{n}, &
\brac{ \dfrac{x_{2}^{n} w_{2}^{n} e^{-\lambda }+z_{n}}{w_{2}^{n}e^{-\lambda }+1}  }, &
w_{1}^{n}e^{-\lambda }, &
\brac{w_{2}^{n}e^{-\lambda }+1}
\end{bmatrix}^{T} & \mbox{if }z_{n}> \dfrac{x_{1}^{n}+x_{2}^{n}}{2}
\end{cases}.
\eeq

This expression is exact but unwieldy, so we derive an approximate model by linearizing the system about a point where we expect the invariant measure to be densest. 
Define $F(\mathbf{x})=\expect (\Phi(\mathbf{x},z))$, where $z$ is uniform on $[0,1]$.  
Let $\mathbf{Z}^{*}$ be the vector such that $\mathbf{Z}^{*}=\expect (\Phi(\mathbf{Z}^{*},z))$ \cite{bhatt}.  This can be thought of as a sort of analogue of the fixed point of a dynamical system for our random dynamical system. We linearize the random dynamical system about this point. We can expect our system to be well approximated by the linearized system if fluctuations about $\mathbf{Z}^*$ are not too large.


First, one can determine that $\mathbf{Z}^{*}$ is
\beqa
\mathbf{Z}^{*}=\begin{bmatrix}
1/4, &
3/4, &
W/2, &
W/2
\end{bmatrix}^{T} \label{fixed},
\eeqa
where $W= (1-e^{-\lambda })^{-1}$ as defined in the proof for Lemma \ref{lem:weightbound}.

Define another random variable $\mathbf{y}^{n}=\mathbf{Z}^{n}-\mathbf{Z}^{*}$.  The Jacobian of $F$ at $\mathbf{Z}^{*}$ is $J:=\partial F(\mathbf{Z}^{*})$, and the covariance matrix of the random perturbation at $\mathbf{Z}^{*}$ is $H:=\expect (G(\mathbf{Z}^{*}) G(\mathbf{Z}^{*})^{T})$, where $G(\mathbf{x},z)=\Phi(\mathbf{x},z)-F(\mathbf{x})$.  Let $\mathbf{d}^{n}$, for $n\geq 0$, be an i.i.d. sequence of random variables each distributed as $N(0,H)$.  The $AR(1)$ model of $\mathbf{y}^{n}$ is written
\beqa
\mathbf{y}^{n+1}=J\mathbf{y}^{n}+\mathbf{d}^{n},
\label{ar1}
\eeqa
and is an approximation for the dynamics of $\mathbf{Z}^{n}-\mathbf{Z}^{*}$ \cite{ar1braun,ar1yaffee}.  This dynamical system can also be expressed as $\mathbf{y}^{n+1}=J\mathbf{y}^{n}+H^{1/2}N(0,I)$.  

The matrices $J$, $H$, and $H^{1/2}$, can be found through some tedious calculations to be
\begin{align*}
J&=\begin{bmatrix}
\dfrac{5-e^{-\lambda }}{4(2-e^{-\lambda })} & \dfrac{1-e^{-\lambda }}{4(2-e^{-\lambda })}&0&0 \\ \\
\dfrac{1-e^{-\lambda }}{4(2-e^{-\lambda })} & \dfrac{5-e^{-\lambda }}{4(2-e^{-\lambda })}&0&0 \\ \\
\dfrac{1}{2}&\dfrac{1}{2} & e^{-\lambda } & 0 \\ \\
-\dfrac{1}{2}&-\dfrac{1}{2} & 0 & e^{-\lambda }
\end{bmatrix},
\end{align*}
\begin{align*}
H=\begin{bmatrix}
\dfrac{(1-e^{-\lambda })^{2}}{24(2-e^{-\lambda })^{2}} & 0&0&0 \\ \\
0&\dfrac{(1-e^{-\lambda })^{2}}{24(2-e^{-\lambda })^{2}}&0&0 \\ \\
0& 0& \dfrac{1}{4}&-\dfrac{1}{4} \\ \\
0& 0& -\dfrac{1}{4}&\dfrac{1}{4} 
\end{bmatrix},
\end{align*}
and
\beq
H^{1/2}=\dfrac{1}{2\sqrt{2}} \begin{bmatrix}
\dfrac{(1-e^{-\lambda })}{\sqrt{3}(2-e^{-\lambda })} & 0&0&0 \\ \\
0&\dfrac{(1-e^{-\lambda })}{\sqrt{3}(2-e^{-\lambda })}&0&0 \\ \\
0& 0& 1&-1 \\ \\
0& 0& -1&1 
\end{bmatrix}.
\eeq 

An $AR(1)$ model for the perceptual boundary $b^{n}$ can be derived using the $AR(1)$ process we have found for random variable $\mathbf{y}^{n}$.  First, we calculate the first $2$ components of $\mathbf{y}^{n}$ using Equation~\ref{ar1},
\beq
\begin{bmatrix}
x_{1}^{n+1}-\dfrac{1}{4} \\ \\
x_{2}^{n+1} -\dfrac{3}{4}
\end{bmatrix} =\begin{bmatrix}
\dfrac{5-e^{-\lambda   { }}}{4(2-e^{-\lambda   { }})}\brac{x_{1}^{n}-\dfrac{1}{4} } +\dfrac{1-e^{-\lambda   { }}}{4(2-e^{-\lambda   { }})}\brac{x_{2}^{n}-\dfrac{3}{4}}+\dfrac{1-e^{-\lambda   { }}}{2\sqrt{6}(2-e^{-\lambda   { }})}N(0,1)  \\ \\
\dfrac{5-e^{-\lambda   { }}}{4(2-e^{-\lambda   { }})}\brac{x_{2}^{n}-\dfrac{3}{4}} +\dfrac{1-e^{-\lambda   { }}}{4(2-e^{-\lambda   { }})}\brac{x_{1}^{n}-\dfrac{1}{4} }+\dfrac{1-e^{-\lambda   { }}}{2\sqrt{6}(2-e^{-\lambda   { }})}N(0,1)
\end{bmatrix}.
\eeq
Noting that $b^{n}-1/2=\sqbrac{\brac{x_{1}^{n}-1/4 }+\brac{x_{2}^{n}-3/4 } }/2$, we add the two components of this vector
  together and divide by 2 to get
\beq
b^{n+1}-\dfrac{1}{2}=\dfrac{1}{2}\brac{\dfrac{3-e^{-\lambda   { }}}{2-e^{-\lambda   { }}}}\brac{b^{n}-\dfrac{1}{2} }+\dfrac{1}{4\sqrt{3}}\brac{\dfrac{1-e^{-\lambda   { }}}{2-e^{-\lambda   { }}}} N(0,1).
\eeq
This is an $AR(1)$ model for the perceptual boundary, it can be expressed as
\begin{eqnarray}
Y^{n+1}=KY^{n}+\sigma \eta_{n},
\label{boundarybeh}
\end{eqnarray}
where $Y^{n}=b^{n}-1/2$, the variable $\eta_{n}$ is a noise term drawn from the standard normal distribution $N(0,1)$, and
\[
K=(3-e^{-\lambda   { }})\sqbrac{2(2-e^{-\lambda   { }})}^{-1}, \ \ \ \ \ \sigma=\brac{1-e^{-\lambda   { }}}\sqbrac{4\sqrt{3}(2-e^{-\lambda   { }})}^{-1}.  
\]
Equation~\ref{boundarybeh} is our simplified probabilistic model for the motion of the boundary $b^n - 1/2$.

As $\lambda\rightarrow 0$, the variables $K$ and $\sigma$ approach $1$ and $0$ respectively, meaning the stochastic process approaches the case where $b^{n+1}=b^{n}$.   This fits with the fact that in when $\lambda=0$, our original model reverts to the MacQueen model, in which there are no fluctuations as $n \rightarrow \infty$.

In order to assess the quality of the AR(1) process $Y^n$ as a model for $b^n$, we compare the variance of $Y^n$ with the variance of $b^n$ while varying $n$ and $\lambda$. For the process $Y^n$, \cite{ar1varref} provides an exact formula for the variance as a function of time when $Y^0=0$:
\begin{align*}
\Var [Y^n] &= \mathbb{E}(Y^{n})^{2}=\sum_{j=0}^{n-1}K^{2j}\sigma^{2} \\
&=\dfrac{1}{48}\left(\dfrac{1-e^{-\lambda  { }}}{2-e^{-\lambda  { }}} \right)^{2} ~ \sum_{j=0}^{n-1}\left(  \dfrac{3-e^{-\lambda   { }}}{2(2-e^{-\lambda   { }})} \right)^{2j}.
\end{align*}

For the original process $b^n$, we estimate the variance using Monte Carlo simulation.
We perform a large number $N$ of simulations of the exemplar system. In each case we simulated the system starting from 
 the  deterministic initial condition $\mathbf{Z}^0 = \mathbf{Z}^*$. This meant that $\mathbb{E} b^n = b^0$ for all $n$.
For each Monte Carlo simulation, we simulated the system for $n= 0, \ldots, \lceil 400/\lambda \rceil$, since this gave convergence to the equilibrium distribution of $\{b_n\}_{n\geq 0}$ for each $\lambda$. For each of a range of $n$ and $\lambda$,
Figure \ref{fig:bdryvar} shows a comparison between the variance of $b^{n}$ (calculated from simulations of the model) in green and blue, and the variance of the AR$(1)$ approximation $Y^{n}$ in red.  The number of Monte Carlo samples $N$ was chosen large enough so that the statistical error in the plot was negligible compared to the thickness of the lines.  

\begin{figure}[h!]
	\includegraphics[width=13cm]{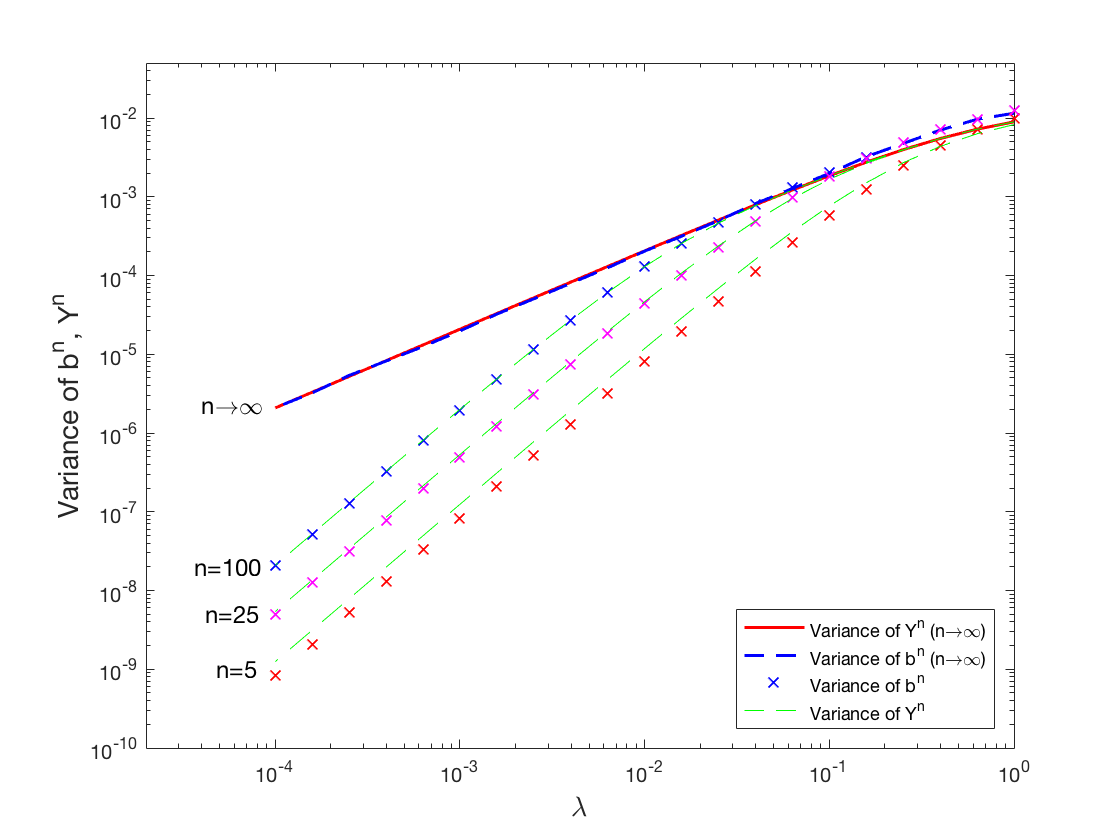}
	\caption{The variances of $b^{n}$ and $Y^n$ versus $\lambda$ for various times $n$.  We set $x_{1}^{0}=1/4$, $x_{2}^{0}=3/4$, and $w_{1}^{0}=w_{2}^{0}=W/2$, so that we start at the fixed point of the system as given in Equation~\ref{fixed}.  The $n\rightarrow \infty$ case for $b^n$ is approximated by setting $n=\lceil 400/ \lambda \rceil$.}
	\label{fig:bdryvar}
\end{figure}

As expected, for all $\lambda$ the variances of $b^n$ and $Y^n$ increase up to their respective equilibrium values as $n \rightarrow \infty$. For smaller values of $\lambda$, the equilibrium values of the variances of $b^n$ and $Y^n$ are indistinguishable to within the accuracy we compute them here. However, for larger values of $\lambda$, the equilibrium variance of the AR(1) model significantly underestimates the equilibrium variance of the original model, showing its limitation as an approximate model.


\bibliographystyle{siamplain}
\bibliography{bensbib}{}

\end{document}